\pdfoutput=1
\documentclass{article}

\usepackage{microtype}
\usepackage{graphicx}
\usepackage{subcaption}
\usepackage{booktabs}

\usepackage{hyperref}

\usepackage[preprint]{icml2026}
\hypersetup{
	pdftitle={Drift is a Sampling Error: SNR-Aware Power Distributions for Long-Horizon Robotic Planning},
	pdfauthor={Kewei Chen, Yayu Long, Mingsheng Shang},
	pdfsubject={arXiv preprint},
	pdfkeywords={robotics, vision-language-action models, inference-time computation, sampling}
}

\usepackage{amsmath}
\usepackage{amssymb}
\usepackage{mathtools}
\usepackage{amsthm}

\usepackage[capitalize,noabbrev]{cleveref}

\theoremstyle{plain}
\newtheorem{theorem}{Theorem}[section]

\theoremstyle{definition}

\theoremstyle{remark}

\icmltitlerunning{CAPS: SNR-Aware Power Distributions for Long-Horizon Planning}

\usepackage{tabularx}  
\usepackage{makecell}  
\usepackage[table]{xcolor} 
\usepackage{multirow}
\newcolumntype{L}[1]{>{\raggedright\arraybackslash}p{#1}}
\newcolumntype{C}[1]{>{\centering\arraybackslash}p{#1}}
\newcolumntype{Y}{>{\centering\arraybackslash}X}

\usepackage{algorithm}
\usepackage{algorithmic}
\usepackage{xcolor}

\begin{document}
	
	\twocolumn[
\icmltitle{Drift is a Sampling Error: SNR-Aware Power Distributions for Long-Horizon Robotic Planning}
\icmltitlerunning{CAPS: SNR-Aware Power Distributions for Long-Horizon Planning}

	\begin{icmlauthorlist}
		\icmlauthor{Kewei Chen}{cigit,ucascq}
		\icmlauthor{Yayu Long}{cigit,ucascq}
		\icmlauthor{Mingsheng Shang}{cigit,ucascq}
	\end{icmlauthorlist}
	
	\icmlaffiliation{cigit}{Chongqing Institute of Green and Intelligent Technology, Chinese Academy of Sciences}
	\icmlaffiliation{ucascq}{Chongqing School, University of Chinese Academy of Sciences}
	
	\icmlcorrespondingauthor{Mingsheng Shang}{msshang@cigit.ac.cn}

	\icmlkeywords{Robotics, Vision-Language-Action Models, Inference-Time Computation, Sampling}
	
	\vskip 0.3in
	]

	\printAffiliationsAndNotice{Emails: \{chenkewei24, longyayu24\}@mails.ucas.ac.cn, msshang@cigit.ac.cn. }

\begin{abstract}
	Despite rapid progress in Vision-Language-Action (VLA) models for robotic control, instruction drift remains a persistent failure mode in long-horizon tasks. This paper reconceptualizes this phenomenon, positing that instruction drift is fundamentally a systematic sampling error: local greedy sampling is prone to collapsing into ``Negative Pivotal Windows''—irreversible local optima with high local probability that sever global success pathways. To address this, we propose \textbf{Context-Aware Power Sampling (CAPS)}, a training-free inference-time computation framework. CAPS leverages power distributions to sharpen global trajectory probabilities, enabling lookahead search over the model's conditional generative trajectory distribution. Furthermore, we introduce a metacognitive control mechanism based on Signal-to-Noise Ratio (SNR). This mechanism triggers adaptive MCMC search solely when drift risk is detected, enabling a dynamic transition from ``intuitive fast thinking'' to ``rational slow search.'' Experiments on RoboTwin, Simpler-WindowX, and Libero-long benchmarks show that CAPS achieves substantial improvements over strong baselines, including OpenVLA and TACO, without parameter updates. These results support the effectiveness of adaptive inference-time computation for improving long-horizon robustness in embodied control.
	
\end{abstract}

\section{Introduction}
\label{sec:intro}

In recent years, Vision-Language-Action (VLA) models have emerged as \textbf{core enablers} for general robotic control \citep{zitkovich2023rt, kim2025openvla}. While these models excel in short-horizon tasks, they frequently encounter critical failure modes in complex, long-horizon scenarios, specifically \textbf{Instruction Drift} \cite{zhang2025vlabench,ouyang2025long}. As tasks progress, irrelevant observations—such as background noise or non-target object movements—gradually dilute the attention weights assigned to the initial instruction. Consequently, the robot loses contextual grounding with the ultimate goal, executing actions that are locally reasonable but globally erroneous \cite{dasari2025ingredients,huang2025prism,zhang2025vldrive}.

Existing solutions primarily follow two paths: prompt engineering and inference-time scaling. Prompt engineering, such as Chain-of-Thought (CoT), attempts to enhance logical coherence but remains constrained by the unidirectional nature of open-loop generation \cite{shen2025enhancing,zhao2025cot}. Conversely, methods like TACO \citep{yang2025steering} adopt a ``generate-verify'' paradigm, modeling inference as a contextual bandit problem. However, these approaches typically rely on parallel independent sampling and reranking. They lack the capacity for \textit{Iterative Refinement} of a single trajectory \cite{yan2025m}. Once the candidate set lacks high-quality solutions, simple reranking becomes ineffective.

To address these challenges, we propose a novel theoretical perspective: the root cause of instruction drift lies in systematic biases within sampling strategies. Specifically, existing local greedy strategies suffer from the myopia of single-step prediction \cite{zhang2025imitation, simchowitz2025pitfalls}. The resulting trajectories, while locally smooth in the time domain, often fracture in global physical consistency. Based on this perspective, \textbf{Inference-time Scaling}~\cite{wu2025inference, pan2025survey} provides a promising framework for mitigating such systematic errors at the distribution level. While this paradigm has achieved success in discrete symbolic reasoning tasks like mathematics \cite{karan2026reasoning}, its application in continuous visuo-motor control remains underexplored. This lag stems from a fundamental \textit{Topological Mismatch}: unlike discrete token generation, robotic policies must operate within high-dimensional continuous action manifolds governed by rigorous, irreversible physical consequences. This makes unconstrained exhaustive sampling—often feasible in discrete domains—difficult to apply under the real-time constraints of closed-loop control. This topological mismatch makes greedy-policy-based robots highly prone to collapsing into ``Negative Pivotal Windows''—irreversible local optima that possess high local probability but physically sever all paths to global success.

To bridge this gap, we propose \textbf{Context-Aware Power Sampling (CAPS)} (see Figure \ref{fig:architecture}). As a dual-process architecture designed for continuous control, CAPS reformulates inference as constrained search on the action manifold under the model's conditional generative trajectory distribution. The framework utilizes \textit{Information-Theoretic Criticality} to dynamically identify \textit{Topological Bifurcation Points} in the action space \cite{liu2025uncertainty}. The system seamlessly transitions from greedy execution (System 1) to global optimization (System 2) only when decision uncertainty spikes. Within System 2, we introduce a \textit{Randomized Resampling Proposal} to escape local optima and enforce long-horizon physical consistency via a Power Distribution-based Metropolis-Hastings \textit{Acceptance Check}. This mechanism performs \textit{Iterative Rectification} on trajectories, improving robustness while preserving practical inference efficiency. Experimental results on RoboTwin, Simpler-WindowX, and Libero-long benchmarks show that CAPS demonstrates clear advantages over OpenVLA \cite{kim2025openvla} and TACO \citep{yang2025steering} in a training-free setting, supporting the effectiveness of shifting computational resources from training to inference for long-horizon robustness.

The main contributions of this paper are:

(1)\textbf{Theoretical Reframing:} We reinterpret long-horizon robotic instruction drift through the lens of sampling error. We introduce global power distributions as a mathematical framework for analyzing this phenomenon and contrast them with traditional local temperature sampling.

(2)\textbf{CAPS Framework:} We propose CAPS, which uses SNR-modulated adaptive MCMC to allocate inference-time compute dynamically. This instantiates a "search-when-necessary" strategy for robotic control.

(3)\textbf{Strong Empirical Performance:} On long-horizon benchmarks including RoboTwin, Simpler-WindowX, and Libero-long, CAPS achieves strong training-free performance and demonstrates clear advantages over OpenVLA and TACO. These results support the effectiveness of inference-time compute for improving long-horizon consistency.

\begin{figure*}[t]
	\centering
	\includegraphics[width=0.9\textwidth]{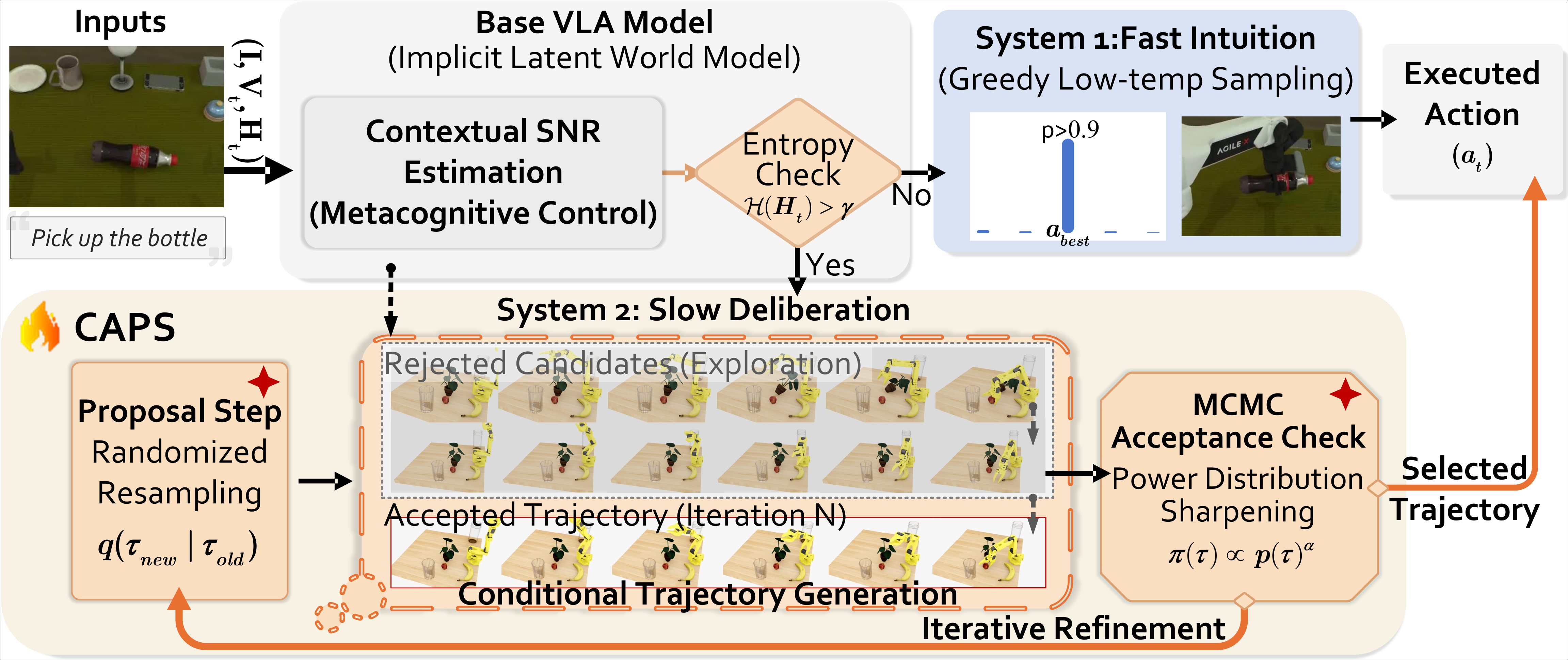}
	\caption{\textbf{Overview of the Context-Aware Power Sampling (CAPS) framework.} 
		The system processes inputs $(I, V_t, H_t)$ and employs Metacognitive Control to dynamically gate computation based on contextual SNR. 
		In high-certainty scenarios ($\mathcal{H} \le \gamma$), it executes fast greedy sampling (\textbf{System 1: Fast Intuition}). 
		Conversely, when an entropy spike is detected ($\mathcal{H} > \gamma$), it activates \textbf{System 2: Slow Deliberation}. 
		As illustrated in the conditional trajectory generation and verification loop, the planner performs a lookahead search:
		 it iteratively generates proposals via randomized resampling ($q(\tau_{new} | \tau_{old})$), filtering out rejected candidates while retaining the accepted trajectory that satisfies the sharpened power distribution ($\pi \propto p^\alpha$). 
		This Iterative Refinement mechanism effectively converts inference-time compute into long-horizon robustness.}
	\label{fig:architecture}
\end{figure*}

\section{Related Work}
\label{sec:related}

\textbf{Robotic Instruction Following and Inference-Time Scaling.} Large Language Models (LLMs) and Vision-Language-Action (VLA) models have become central drivers of robotic planning \citep{zitkovich2023rt, kim2025openvla, brohan2023can}. To address uncertainty in long-horizon tasks, TACO \citep{yang2025steering} models inference as a contextual bandit problem, utilizing a verifier to re-rank parallel candidate actions; Chain-of-Thought (CoT) \citep{wei2022chain} and ReAct \citep{yao2023react} focus on inducing intermediate reasoning steps. However, the former is limited by the quality of the candidate set, while the latter remains essentially open-loop generation. In contrast, CAPS discards external verifiers and parallel sampling, instead employing an intra-inference closed-loop mechanism to directly optimize single trajectory generation quality, thereby more efficiently correcting long-horizon drift.

\textbf{Inference-Time Computation and Sampling.} 
Inference-time compute has emerged as an important paradigm for enhancing complex reasoning capabilities \citep{yao2023tree, lightman2024let}. However, existing discrete sampling strategies face a dilemma in long-horizon embodied control. Traditional low-temperature sampling is inherently \textit{myopic} \citep{fu2024blink, karan2026reasoning}, causing policies to fall into local optima. While Tree of Thoughts \citep{yao2023tree} and RoboMonkey \citep{kwok2025robomonkey} introduce global search, they often rely on fixed, high computational budgets, making them ill-suited for the real-time constraints of continuous action manifolds. CAPS adopts a metacognitive ``Search-when-necessary'' strategy that aims to mitigate this topological mismatch through SNR-based dynamic compute allocation.

\textbf{Power Distributions and Energy Models.} From a statistical physics perspective, \cite{faria2024quest} utilized Metropolis-Hastings sampling in machine translation to approximate quality-weighted distributions. \cite{karan2026reasoning} further proved that sampling from a power distribution $\pi(\tau) \propto p(\tau)^\alpha$ is equivalent to implicit planning over future trajectories. This paper extends this line of theory to the domain of embodied intelligence. We propose using global power distributions to sharpen the probability landscape, offering a theoretical perspective for understanding and mitigating instruction drift in continuous action spaces.

\section{Methodology}
\label{sec:method}

This section details \textbf{Context-Aware Power Sampling (CAPS)}, an inference-time adaptive computation framework designed for continuous control. CAPS aims to address the Manifold Alignment problem in high-dimensional action spaces by constructing a dual-process control architecture that establishes a dynamic balance between fast heuristic execution and slow global optimization. The framework samples and refines candidate future trajectories under the model's conditional generative trajectory distribution, thereby correcting instruction drift caused by local horizons in long-horizon planning tasks.

\subsection{Problem Definition: From Text Generation to Trajectory-Level Search}
In long-horizon robotic control tasks, we formalize the problem as generating an action sequence $\tau$ given natural language instructions $I$ and the current observation history $H_t$. Traditional autoregressive generation $p_{\theta}(\tau | I, H_t)$ typically employs greedy sampling, which often falls into local optima when handling complex long-horizon dependencies due to ignoring the \textit{Global Topology of Action Manifold}. Therefore, we reframe the inference objective as sampling from a global \textbf{Power Distribution}:
\vspace{-1mm}
\begin{equation}
	\pi(\tau) \propto p_{\theta}(\tau | I, H_t)^\alpha, \quad \alpha \ge 1
	\label{eq:power_dist}
\end{equation}
where $p_{\theta}$ represents the base model's original probability distribution, and $\alpha$ is the sharpening coefficient. This distribution, via mathematical sharpening, reweights the model's conditional generative trajectory distribution toward trajectories with higher long-horizon consistency, thereby avoiding negative pivotal windows where local probability is high but long-term feasibility is absent.

\subsection{Adaptive Computation as Optimal Control}
To balance inference precision and computational budget, we model the selection of the sampling strategy as a resource-constrained sequence decision problem. The switching mechanism of CAPS is based on the optimization of a computational utility function, dynamically allocating computing resources according to the current contextual uncertainty.

\subsubsection{Metacognitive Signal: KL Divergence-based SNR}
\label{sec:contextual_snr}

To quantify signal decay during long-horizon planning, we view instruction drift as a process where the policy distribution gradually degrades into random noise. Therefore, we define \textbf{Contextual SNR} as the information gain of the current policy $\pi_\theta$ relative to the \textbf{Maximum Entropy Noise Floor}.

Formally, we use Kullback-Leibler (KL) divergence to measure this "distance between dominant signal and background noise":
\vspace{-1mm}
\begin{equation}
	\text{SNR}_t \triangleq D_{\text{KL}}(\pi_\theta(\cdot|H_t) \,||\, \mathcal{U}_{\text{unif}}) = \sum_{a \in \mathcal{A}} \pi_\theta(a|H_t) \log \frac{\pi_\theta(a|H_t)}{1/|\mathcal{A}|}
	\label{eq:snr_kl}
\end{equation}
where $\mathcal{U}_{\text{unif}}$ is the uniform distribution over the action space $\mathcal{A}$ (representing pure noise). Expanding the KL divergence term yields:
\vspace{-1.5mm}
\begin{equation}
	\text{SNR}_t = \log |\mathcal{A}| - \mathcal{H}(\pi_\theta(\cdot|H_t))
	\label{eq:snr_entropy_relation}
\end{equation}
Since the action space size $|\mathcal{A}|$ is constant during inference, Equation (\ref{eq:snr_entropy_relation}) indicates that $\text{SNR}_t$ exhibits a strict linear negative correlation with the policy's Shannon entropy $\mathcal{H}(\pi_\theta)$. From a topological geometry perspective, a sharp increase in entropy often corresponds precisely to "Bifurcation Points" on the probability manifold—critical moments where the model faces choices among multiple potential paths. This perspective provides a solid theoretical basis for using entropy as an efficient proxy metric for drift detection: when $\mathcal{H}(\pi_\theta)$ rises, it directly implies that the effective signal $\text{SNR}_t$ is being drowned out by background noise, necessitating the activation of System 2 for correction.

\textbf{Optimal Switching Policy.} Our goal is to minimize the total expected loss $\mathcal{L}(\pi) = \mathbb{E}[\text{Error} | \pi] + \lambda \cdot \mathcal{C}(\pi)$, where $\mathcal{C}(\pi)$ represents computational cost, and $\lambda$ is the Lagrange multiplier regulating the trade-off between precision and budget. Under this formulation, the resulting policy takes the form of a hard-threshold switching rule (derivation in Appendix \ref{sec:appendix_gating_proof}):
\begin{equation}
	\pi^*(H_t) = 
	\begin{cases} 
		\pi_{\text{slow}} (\text{CAPS}) & \text{if } \mathcal{H}(\pi_\theta(\cdot|H_t)) > \gamma(\lambda) \\
		\pi_{\text{fast}} (\text{Greedy}) & \text{otherwise}
	\end{cases}
\end{equation}
where $\gamma(\lambda)$ is the entropy threshold implicitly determined by $\lambda$. This suggests that the system should allocate additional computation only when uncertainty exceeds the tolerance limit determined by the budget (i.e., high-entropy pivotal windows), yielding an efficient trade-off between robustness and computation.

\begin{figure*}[t]
	\centering
	\includegraphics[width=0.9\textwidth]{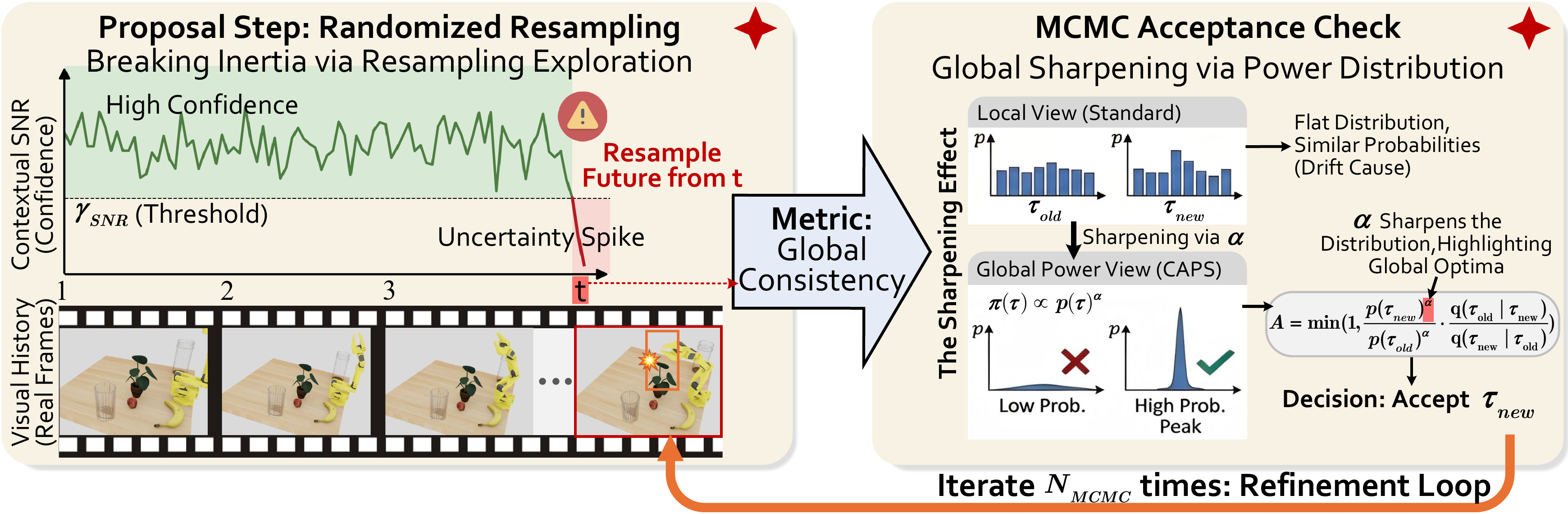}
	\caption{\textbf{Mechanism of the System 2 Inference Loop.} 
		\textbf{Left (Proposal):} Triggered by an SNR drop below the critical threshold $\gamma_{SNR}$ at time $t$, the system breaks inertia by initiating stochastic resampling of future trajectories. 
		\textbf{Right (Acceptance):} The proposed candidate ($\tau_{new}$) is compared against the incumbent ($\tau_{old}$) via a global consistency metric. 
		By applying power distribution sharpening ($\alpha$), CAPS amplifies probability gaps to robustly identify global optima. 
		This refine-and-check loop iterates $N_{MCMC}$ times, effectively converting inference-time compute into trajectory robustness.}
	\label{fig:method_loop}
\end{figure*}

\subsubsection{Block-Based Autoregressive MCMC (Intra-Inference Loop)}
When the system switches to $\pi_{\text{slow}}$ mode, we utilize the Metropolis-Hastings algorithm to construct an intra-inference closed-loop correction process. Figure \ref{fig:method_loop} visually illustrates this iterative refinement mechanism composed of Randomized Resampling (Proposal) and Global Sharpening Verification (Acceptance). The process consists of three core stages:

\textbf{Proposal.} We adopt a randomized resampling strategy. For mathematical rigor, we explicitly define the proposal distribution $q(\tau_{new} | \tau_{old})$ as the conditional generation distribution of the base model $p_{\theta}$ under the current context. Specifically, we keep the prefix of the current action block unchanged and resample the suffix using $p_{\theta}$ with a temperature of $1/\alpha$ to generate a candidate trajectory $\tau_{new}$. This step explores potential alternative paths by resampling from the model's conditional generative trajectory distribution within the local neighborhood of the Action Manifold.

\textbf{Acceptance.} The system decides whether to accept the candidate correction based on the likelihood ratio under the power distribution. The acceptance rate $A(\tau_{old}, \tau_{new})$ is defined as:
\begin{equation}
	A(\tau_{old}, \tau_{new}) = \min \left( 1, \frac{p_{\theta}(\tau_{new} | I, H_t)^\alpha}{p_{\theta}(\tau_{old} | I, H_t)^\alpha} \cdot \frac{q(\tau_{old} | \tau_{new})}{q(\tau_{new} | \tau_{old})} \right)
	\label{eq:mh_ratio}
\end{equation}
The first term represents the likelihood contrast of the two trajectories under the global power distribution, while the second term is the correction factor for the proposal distribution (Hastings Correction). A high $\alpha$ value acts as a strict verifier, ensuring that only trajectories highly consistent with instruction $I$ and history $H_t$ are retained.

\textbf{Iterative Refinement.} The proposal and acceptance process described above is repeated $N_{\text{MCMC}}$ times. This iteration mechanism ensures that the system can fully explore and verify candidate trajectories within the internal probability space before outputting actions. This shifts the inference process from flat generation toward structured cognitive control.

\subsubsection{Chunk-Level Approximation and Long-Horizon Interpretation}

Eq.~(\ref{eq:power_dist}) defines the target distribution at the trajectory level, whereas the practical implementation of CAPS operates on finite action blocks. Directly optimizing over the entire future trajectory is computationally prohibitive at inference time. Thus, CAPS decomposes the global search problem into an autoregressive sequence of local block refinements. Analogous to receding-horizon control, this block-based MCMC serves as a computationally tractable approximation, providing a practical mechanism to incrementally maximize the joint trajectory probability under real-time constraints.

The SNR gating mechanism links these local interventions to the horizon-level theory. By triggering MCMC refinement only at low-SNR decision points (high-uncertainty pivotal windows), CAPS selectively targets the localized failures that most strongly dictate long-range error accumulation. In this context, Theorem~\ref{thm:horizon_ideal} provides the idealized trajectory-level perspective, while Theorem~\ref{thm:horizon_asymptotic} characterizes how finite MCMC steps on local chunks influence the practical horizon extension.

Finally, despite operating on local blocks, the refinement process retains a global lookahead property. The acceptance step evaluates candidate chunks under the sharpened conditional generation distribution induced by the base VLA model, which implicitly encodes long-horizon priors learned during pre-training. Consequently, while CAPS avoids explicit full-sequence unrolling, it still scores and selects local actions according to their compatibility with broader long-term trajectory consistency.

\paragraph{Theoretical Analysis and Physical Interpretation.} 
To explain the efficacy of CAPS mechanistically, we explore in \textbf{Appendix \ref{sec:appendix_energy}} how CAPS reshapes the energy landscape to avoid local minima from the perspective of Energy-Based Models (EBM). We also discuss the mechanism's dependence on model calibration. Furthermore, in \textbf{Appendix \ref{sec:appendix_horizon}}, we provide an analysis of how global sharpening may extend the robot's \textit{Effective Horizon} from linear to polynomial order under the assumptions of our model, offering theoretical support for the performance trends observed in long-horizon tasks.

\paragraph{Algorithm Implementation.}
To clearly demonstrate how CAPS dynamically coordinates "Fast Intuition" and "Slow Search" at inference time, we provide the complete pseudocode in \textbf{Appendix \ref{sec:appendix_algo}}.

\subsection{Theoretical Insight: Effective Horizon Extension}
\label{sec:theory_analysis}

To quantify the benefits of CAPS in long-horizon tasks, we introduce the concept of \textbf{Effective Horizon ($T_{eff}$)}, defined as the maximum number of steps a policy can sustain while maintaining a success rate confidence $\eta$.

\begin{theorem}[Horizon Extension Theorem - Ideal Bound]
	\label{thm:horizon_ideal}
	Assume the base policy has a single-step drift probability $\epsilon$ at pivotal windows. Under the ideal sampling assumption, CAPS global sharpening ($\alpha > 1$) extends the effective horizon from linear to polynomial order:
	\begin{equation}
		\frac{T_{eff}(\text{CAPS})}{T_{eff}(\text{Base})} \approx \epsilon^{1-\alpha}
	\end{equation}
\end{theorem}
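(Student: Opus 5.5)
The argument uses a simple two-outcome model of a pivotal window and compares a geometric survival probability with and without sharpening. At each pivotal window the base policy either keeps the success path, with probability $1-\epsilon$, or takes the drifting branch, with probability $\epsilon$. The trajectory space at that window therefore splits into two aggregated branches with masses $1-\epsilon$ and $\epsilon$. Drift events at distinct windows are treated as independent; this is the same kind of idealization as the ``ideal sampling assumption'' in Theorem~\ref{thm:horizon_ideal}. Under the base policy, the probability of surviving $T$ steps is $(1-\epsilon)^T$. Setting this equal to the confidence level $\eta$ gives
\begin{equation*}
T_{eff}(\text{Base}) = \frac{\log \eta}{\log(1-\epsilon)} \approx \frac{\log(1/\eta)}{\epsilon}
\end{equation*}
for small $\epsilon$. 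The horizon thus scales linearly in $1/\epsilon$.

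The next step computes the per-window drift probability under CAPS. The ideal sampling assumption is read as: the MCMC chain has mixed, so System~2 samples exactly from the power distribution $\pi\propto p_\theta^\alpha$ of Eq.~(\ref{eq:power_dist}). Renormalizing the two branch masses gives
\begin{equation*}
\epsilon_\alpha = \frac{\epsilon^\alpha}{\epsilon^\alpha + (1-\epsilon)^\alpha} = \epsilon^\alpha\bigl(1+O(\epsilon)\bigr).
\end{equation*}
This step treats each branch as a single mode with representative trajectory probabilities proportional to $\epsilon$ and $1-\epsilon$. If a branch is instead a union of many trajectories, sharpening acts trajectory by trajectory, and the branch masses change by factors that depend on entropy within each branch. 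So I would state explicitly that the decisive comparison is between a dominant success trajectory and a dominant drift trajectory. This is the natural reading of ``pivotal window'' as a bifurcation point, in the sense of Section~\ref{sec:contextual_snr}.

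Repeating the survival computation with $\epsilon_\alpha$ in place of $\epsilon$ gives $T_{eff}(\text{CAPS}) \approx \log(1/\eta)/\epsilon^\alpha$. Taking the ratio of the two horizons, the common factor $\log(1/\eta)$ cancels:
\begin{equation*}
\frac{T_{eff}(\text{CAPS})}{T_{eff}(\text{Base})} \approx \frac{\epsilon}{\epsilon^\alpha} = \epsilon^{1-\alpha}.
\end{equation*}
In terms of $1/\epsilon$, the horizon goes from order $(1/\epsilon)^1$ to order $(1/\epsilon)^\alpha$. This is the sense in which the theorem says the horizon extends ``from linear to polynomial order.''

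The main obstacle is justifying $\epsilon_\alpha\approx\epsilon^\alpha$, that is, the single-mode aggregation in the second step. Both $\log(1-x)\approx -x$ approximations hold uniformly for small $\epsilon$, and the product structure is part of the idealization, so those steps are routine. The branch-mass step is not. I would make it rigorous by assuming a bounded number $K$ of trajectories per branch with comparable probabilities. Under that assumption the branch mass ratio is distorted by at most a factor $K^{\alpha-1}$, which the $\approx$ in the theorem absorbs as a constant.
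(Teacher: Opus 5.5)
Your proposal follows the paper's proof essentially step for step. It uses the same geometric-survival estimate $(1-\epsilon)^T\approx e^{-\epsilon T}\ge\eta$ to get $T_{eff}(\text{Base})\approx -\ln\eta/\epsilon$, substitutes a sharpened per-window drift rate $\epsilon'\approx\epsilon^\alpha$, and cancels $-\ln\eta$ in the ratio. The one difference is that you derive $\epsilon'=\epsilon^\alpha/(\epsilon^\alpha+(1-\epsilon)^\alpha)=\epsilon^\alpha(1+O(\epsilon))$ from a single-mode-per-branch renormalization and flag the multi-trajectory caveat. The paper instead simply asserts $\epsilon'\approx\epsilon^\alpha$ ``ideally'' with a pointer to Appendix~\ref{sec:appendix_theory_A}, so your version makes explicit the idealization the result rests on.
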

However, actual inference is constrained by the computational budget. Considering the convergence process of MCMC, we derive the asymptotic lower bound under finite steps $N$.
\begin{theorem}[Asymptotic Horizon Extension - Performance Lower Bound]
	\label{thm:horizon_asymptotic}
	Assume the base policy has a single-step drift rate $\epsilon$. Under MCMC sampling with finite steps $N$, the effective horizon extension multiple of CAPS satisfies:
	\begin{equation}
		\frac{T_{eff}(\text{CAPS})}{T_{eff}(\text{Base})} \approx \frac{\epsilon}{\epsilon^\alpha + \underbrace{\mathcal{O}(\rho^N)}_{\text{Sampling Bias}}} \xrightarrow{N \to \infty} \epsilon^{1-\alpha}
	\end{equation}
	where $\rho \in (0,1)$ is the mixing rate of the Markov chain, and $\mathcal{O}(\rho^N)$ represents the residual sampling error due to incomplete MCMC convergence.
\end{theorem}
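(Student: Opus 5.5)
The plan is to reduce the horizon ratio to a ratio of per-step drift probabilities and then bound how far the finite-$N$ chain is from the ideal sharpened distribution. We model execution as a sequence of independent pivotal windows. At each window, the policy drifts into a Negative Pivotal Window with per-step probability $\delta$. Survival after $T$ windows is then $(1-\delta)^T$. Setting this equal to $\eta$ gives
\begin{equation*}
T_{eff} = \frac{\log(1/\eta)}{-\log(1-\delta)} \approx \frac{\log(1/\eta)}{\delta},
\end{equation*}
valid to first order in small $\delta$. The ratio of effective horizons is therefore $\delta_{\text{Base}}/\delta_{\text{CAPS}}$, with $\delta_{\text{Base}}=\epsilon$ by assumption. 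The whole problem reduces to estimating $\delta_{\text{CAPS}}$ under $N$-step MCMC.

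\textbf{Ideal case (recovering Theorem~\ref{thm:horizon_ideal}).} At a pivotal window we idealize the choice as binary. The drift branch has base mass $\epsilon$ and the success branch has mass $1-\epsilon$. Under $\pi\propto p_\theta^\alpha$, the drift mass becomes
\begin{equation*}
\frac{\epsilon^\alpha}{\epsilon^\alpha+(1-\epsilon)^\alpha} = \epsilon^\alpha\,(1+O(\epsilon)).
\end{equation*}
This is where the implicit hypothesis enters: the base model must be calibrated in the sense of Appendix~\ref{sec:appendix_energy}, so that drift trajectories genuinely carry the lower joint probability $\epsilon$. Plugging $\delta_{\pi}\approx\epsilon^\alpha$ into the horizon formula gives the ratio $\epsilon^{1-\alpha}$.

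\textbf{Finite $N$.} Let $\mu_N$ be the law of the MH chain of Eq.~(\ref{eq:mh_ratio}) after $N$ iterations, started from the base-model proposal. Assume geometric ergodicity with rate $\rho\in(0,1)$, i.e.\ $\|\mu_N-\pi\|_{TV}\le C\rho^N$. This holds, for instance, by a Doeblin minorization: the proposal $q$ resamples the suffix from $p_\theta$ and so has full support on the finite block space, giving $\rho\le 1-\min_\tau q\,A$. Because the drift event $D$ is measurable,
\begin{equation*}
|\mu_N(D)-\pi(D)|\le C\rho^N,
\end{equation*}
so $\delta_{\text{CAPS}}\le \epsilon^\alpha+\mathcal{O}(\rho^N)$. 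Substituting into the horizon ratio gives $\epsilon/(\epsilon^\alpha+\mathcal{O}(\rho^N))$. This is a lower bound on the extension, and it tends to $\epsilon^{1-\alpha}$ as $N\to\infty$.

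\textbf{Main obstacle.} The hard step is justifying a uniform mixing rate $\rho$ independent of the window. With sharpening, the acceptance probability can become tiny when the chain starts in a low-probability basin. This is especially acute since the proposal uses temperature $1/\alpha$, which makes the Hastings correction nontrivial.

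The first line of attack is to exploit the finite discretized action space $|\mathcal{A}|$ together with the block length. Together these give an explicit minorization constant depending on $\alpha$, the block length, and $\min p_\theta$, which is then absorbed into $\rho$.

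A second issue is that errors accumulate across chunks. We would treat each pivotal window independently, following the receding-horizon approximation, so that per-window TV errors add only to the per-step drift rather than compounding multiplicatively.
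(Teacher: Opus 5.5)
Your proposal follows the paper's proof essentially step for step: you reduce the horizon ratio to $\delta_{\text{Base}}/\delta_{\text{CAPS}}$ via $(1-\delta)^T\ge\eta$, take $\epsilon^\alpha$ as the idealized sharpened drift rate, and add the geometric total-variation bound $C\rho^N$ on the drift event to obtain $\epsilon/(\epsilon^\alpha+\mathcal{O}(\rho^N))\to\epsilon^{1-\alpha}$. Your binary-branch computation and Doeblin remark go beyond the paper, which simply assumes geometric convergence; note, however, that a suffix-resampling proposal with a fixed prefix does not have full support on the block space, so your minorization sketch would need a randomized cut point or a multi-step argument.
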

\textbf{Physical Significance:} Under the assumptions of our analysis, Theorem \ref{thm:horizon_ideal} suggests that global sharpening can substantially suppress the effective drift rate. For example, setting $\alpha=2$ and $\epsilon=0.1$ yields a 10$\times$ horizon extension in the idealized setting. Theorem \ref{thm:horizon_asymptotic} further characterizes how this effect depends on the available computational budget and the residual sampling error $\mathcal{O}(\rho^N)$. Together, these results offer a plausible explanation for the long-horizon robustness trends observed in our experiments.

\begin{figure*}[t]
	\centering
	\includegraphics[width=0.8\textwidth]{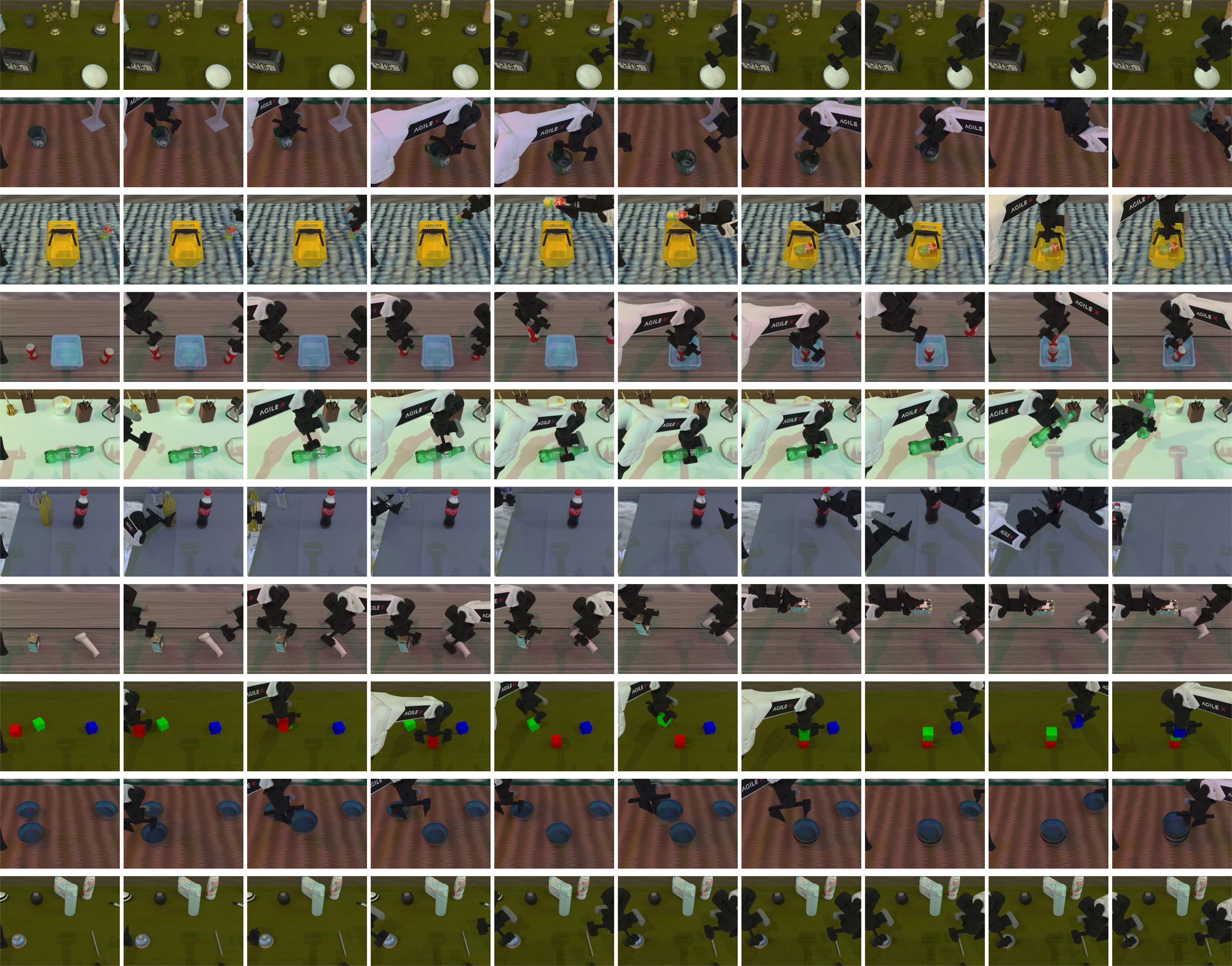}
	
	\caption{\textbf{Visualizing Bimanual Complexity on RoboTwin 2.0.} The figure displays execution snapshots of CAPS in various high-difficulty bimanual tasks. These tasks require extreme spatiotemporal coordination. CAPS effectively suppresses local drift of a single arm through global sharpening, significantly improving the success rate of bimanual collaboration.}
	\label{fig:robotwin_viz}
	\vspace{-1em}
\end{figure*}

\section{Experiments}
\label{sec:experiments}

We evaluate \textbf{CAPS} on three benchmarks: RoboTwin \citep{mu2025robotwin}, Simpler-WindowX \citep{li2025evaluating}, and Libero-long \citep{liu2023libero}, using 4 NVIDIA A100 GPUs (config in Appendix \ref{sec:appendix_hardware}). We address: \textbf{1. Robustness} in complex manipulation; \textbf{2. OOD Generalization}; and \textbf{3. Long-Horizon Consistency} against reranking baselines.

\begin{table*}[t]
	\centering
	\caption{Evaluation of success rate(\%) on the RoboTwin 1.0.} 
	\label{tab:robotwin1}
	\resizebox{0.85\textwidth}{!}{
		\begin{tabular}{c|ccccc}
			\toprule
			~ & Block Handover & Bottles Adjust & Container Place & Diverse Bottles Pick & Dual Bottles Pick Easy \\
			\midrule
			$\pi_0$ \cite{black2024pi_0} & 41.0 & 31.0 & 25.0 & 21.0 & 60.0 \\
			$\pi_0$ + TACO \citep{yang2025steering} & 62.0 & 40.0 & 40.0 & 27.0 & 70.0 \\
			\rowcolor{gray!15}
			\textbf{CAPS(Ours)} & \textbf{67.0} & \textbf{51.0} & \textbf{49.0} & \textbf{31.0} & \textbf{72.0} \\
			\midrule
			~ & Dual Bottles Pick Hard & Pick Apple Messy & Shoe Place & Mug Hanging Easy & Average \\  \midrule
			$\pi_0$ \cite{black2024pi_0} & 48.0 & 15.0 & 42.0 & 7.0 & 32.2 \\
			$\pi_0$ + TACO \citep{yang2025steering} & 52.0 & 19.0 & \textbf{50.0} & 12.0 & 41.3 \\
			\rowcolor{gray!15}
			\textbf{CAPS(Ours)} & \textbf{61.0} & \textbf{26.0} & 49.0 & \textbf{21.0} & \textbf{47.4} \\
			\bottomrule
		\end{tabular}
	}
\end{table*}

\subsection{RoboTwin Benchmark: Bimanual Collaboration and Complex Manipulation}

RoboTwin tests precise bimanual coordination where minor errors cause total failure. We evaluated CAPS on RoboTwin 1.0 with $\pi_0$.
Table \ref{tab:robotwin1} shows CAPS improves $\pi_0$ to 47.4\% (+15.2\%), surpassing TACO (+6.1\%). This suggests an advantage of active iterative refinement (MCMC) over passive screening. In the drift-prone ``Dual Bottles Pick Hard,'' CAPS raised success from 48.0\% to 61.0\% via iterative search.

\begin{table*}[t]
	\centering
	\caption{Success rate (\%) evaluated on Simpler-WindowX. Baseline results are taken from \textit{Simpler}, \textit{RoboVLM}, and \textit{SpatialVLA}. Our method achieves an average improvement of 12.5\% over $\pi_0$.}
	\label{tab:simpler}
	\resizebox{1.0\textwidth}{!}{
		\begin{tabular}{ccccc|cc >{\columncolor{gray!10}}c}
			\toprule
			
			~ & \makecell{RT-1-X \\ \cite{o2024open}} & \makecell{Octo \\ \cite{mees2024octo}} & \makecell{RoboVLM \\ \cite{li2024towards}} & \makecell{SpatialVLA \\ \cite{qu2025spatialvla}} & \makecell{$\pi_0$ \\ \cite{black2024pi_0}} & \makecell{$\pi_0$ + TACO \\ \citep{yang2025steering}} & \textbf{\makecell{CAPS \\ (Ours)}} \\
			\midrule
			Spoon on Towel & 0.0 & 12.5 & 29.2 & 16.7 & 36.0 & 52.0 & \textbf{57.0} \\
			Carrot on Plate & 4.2 & 8.3 & 25.0 & 25.0 & 42.0 & 52.0 & \textbf{61.0} \\
			Stack Cubes & 0.0 & 0.0 & 12.5 & 29.2 & 34.0 & 30.0 & \textbf{37.0} \\
			Eggplant in Basket & 0.0 & 43.1 & 58.3 & \textbf{100.0} & 80.0 & 88.0 & 87.0 \\
			\midrule
			Average & 1.1 & 16.0 & 31.3 & 42.7 & 48.0 & 55.5 & \textbf{60.5} \\
			\bottomrule
		\end{tabular}
	}
\end{table*}

\subsection{Simpler-WindowX: OOD Generalization}
Simpler-WindowX evaluates robustness against visual changes (low SNR scenarios).
As shown in Table \ref{tab:simpler}, CAPS achieved 60.5\% with $\pi_0$, significantly outperforming specialized models like SpatialVLA (42.7\%). In high-interference tasks like ``Carrot on Plate'' (+19.0\%), the SNR gating mechanism effectively triggers resampling when noise lowers confidence, planning robust paths in low-SNR regions.

\begin{table*}[tbp]
	\centering
	\caption{RoboTwin 2.0 Benchmark Evaluation. Each task is tested using 100 different seeds across 100 randomly generated scenes. For test-time scaling, the number of candidate actions is set to 50.}
	\label{tab:robotwin2}
	\resizebox{\textwidth}{!}{
		\begin{tabular}{cccccccc}
			\toprule
			~ & Adjust Bottle & Beat Block Hammer & Blocks Ranking Size & Dump Bin Bigbin & Grab Roller & Handover Block & Handover Mic \\
			\midrule
			RDT \cite{liu2025rdt} & 81.0 & 77.0 & 0.0 & 64.0 & 74.0 & \textbf{45.0} & \textbf{90.0} \\
			$\pi_{0.5}$ \cite{black2025pi_} & 89.0 & 69.0 & 36.0 & 86.0 & \textbf{100.0} & 24.0 & 58.0 \\
			$\pi_{0.5}$ + TACO & 93.0 & 79.0 & 40.0 & 87.0 & 98.0 & 36.0 & 63.0 \\
			\rowcolor{gray!15}
			\textbf{CAPS (Ours)} & \textbf{95.0} & \textbf{81.0} & \textbf{46.0} & \textbf{89.0} & 94.0 & 38.0 & 72.0 \\
			
			\toprule
			~ & Lift Pot & Move Can Pot & Move Pillbottle Pad & Move Playingcard Away & Move Stapler Pad & Open Laptop & Open Microwave \\
			\midrule
			RDT \cite{liu2025rdt} & 72.0 & 25.0 & 8.0 & 43.0 & 2.0 & 59.0 & \textbf{37.0} \\ 
			$\pi_{0.5}$ \cite{black2025pi_} & 62.0 & 42.0 & \textbf{55.0} & 85.0 & 13.0 & 67.0 & 20.0 \\
			$\pi_{0.5}$ + TACO & 66.0 & 57.0 & 54.0 & \textbf{88.0} & 18.0 & 67.0 & 21.0 \\
			\rowcolor{gray!15}
			\textbf{CAPS (Ours)} & \textbf{76.0} & \textbf{61.0} & 53.0 & 72.0 & \textbf{21.0} & \textbf{74.0} & 27.0 \\
			
			\toprule
			~ & Pick Diverse Bottles & Pick Dual Bottles & Place A2B Left & Place A2B Right & Place Bread Basket & Place Bread Skillet & Place Burger Fries \\
			\midrule
			RDT \cite{liu2025rdt} & 2.0 & 42.0 & 3.0 & 1.0 & 10.0 & 5.0 & 50.0 \\
			$\pi_{0.5}$ \cite{black2025pi_} & 52.0 & 72.0 & 51.0 & 39.0 & 62.0 & 62.0 & 89.0 \\
			$\pi_{0.5}$ + TACO & 59.0 & \textbf{73.0} & \textbf{56.0} & 42.0 & \textbf{65.0} & 61.0 & 92.0 \\
			\rowcolor{gray!15}
			\textbf{CAPS (Ours)} & \textbf{62.0} & 71.0 & 54.0 & \textbf{47.0} & 63.0 & \textbf{69.0} & \textbf{93.0} \\
			
			\toprule
			~ & Place Cans Plasticbox & Place Container Plate & Place Dual Shoes & Place Fan & Place Mouse Pad & Place Object Basket & Place Object Scale \\
			\midrule
			RDT \cite{liu2025rdt} & 6.0 & 78.0 & 4.0 & 12.0 & 1.0 & 33.0 & 1.0 \\
			$\pi_{0.5}$ \cite{black2025pi_} & 68.0 & 85.0 & 23.0 & 34.0 & 16.0 & 69.0 & 45.0 \\
			$\pi_{0.5}$ + TACO & 72.0 & \textbf{87.0} & 28.0 & 32.0 & 24.0 & \textbf{78.0} & 52.0 \\
			\rowcolor{gray!15}
			\textbf{CAPS (Ours)} & \textbf{79.0} & \textbf{87.0} & \textbf{33.0} & \textbf{36.0} & \textbf{25.0} & 76.0 & \textbf{57.0} \\
			
			\toprule
			~ & Place Object Stand & Place Phone Stand & Place Shoe & Put Object Cabinet & Rotate QRcode & Shake Bottle Horizontally & Shake Bottle \\
			\midrule
			RDT \cite{liu2025rdt} & 15.0 & 15.0 & 35.0 & 33.0 & 50.0 & 84.0 & 74.0 \\
			$\pi_{0.5}$ \cite{black2025pi_} & 68.0 & 76.0 & 53.0 & 54.0 & 62.0 & \textbf{100.0} & 98.0 \\
			$\pi_{0.5}$ + TACO & 78.0 & 86.0 & 65.0 & 56.0 & 67.0 & \textbf{100.0} & \textbf{99.0} \\
			\rowcolor{gray!15}
			\textbf{CAPS (Ours)} & \textbf{81.0} & \textbf{89.0} & \textbf{67.0} & \textbf{58.0} & \textbf{69.0} & \textbf{100.0} & 98.0 \\
			
			\toprule
			~ & Stack Blocks Three & Stack Blocks Two & Stack Bowls Three & Stack Bowls Two & Stamp Seal & Turn Switch & Average \\
			\midrule
			RDT \cite{liu2025rdt} & 2.0 & 21.0 & 51.0 & 76.0 & 1.0 & 35.0 & 34.6 \\
			$\pi_{0.5}$ \cite{black2025pi_} & 43.0 & 81.0 & 64.0 & 93.0 & 26.0 & 42.0 & 59.3\\
			$\pi_{0.5}$ + TACO & 45.0 & \textbf{91.0} & 68.0 & 93.0 & 38.0 & \textbf{49.0} & 64.0 \\
			\rowcolor{gray!15}
			\textbf{CAPS (Ours)} & \textbf{52.0} & 89.0 & \textbf{74.0} & \textbf{97.0} & \textbf{42.0} & 47.0 & \textbf{66.2} \\
			
			\bottomrule
		\end{tabular}
	}
\end{table*}

\begin{table*}[h]
	\centering
	\caption{Evaluation results on the Libero-long benchmark. We apply TACO and CAPS to $\pi_{0.5}$. For the autoregressive VLA architecture, the action sampling temperature is set to 1. Results marked with * are directly cited from Robomonkey \citep{kwok2025robomonkey}.}
	\vspace{-2mm}
	\label{tab:libero}
	\resizebox{0.9\textwidth}{!}{
		\begin{tabular}{lccc|cc}
			\toprule
			\multirow{2}{*}{} 
			& {$\pi_{0.5}$} 
			& {+ TACO  } 
			& \multicolumn{1}{c}{\cellcolor{gray!15}\textbf{CAPS}} 
			& {OpenVLA*} 
			& {Robomonkey*} \\
			
			& \citep{black2025pi_} & \citep{yang2025steering} & \multicolumn{1}{c}{\cellcolor{gray!15}\textbf{(Ours)}} & \citep{kim2025openvla} & \citep{kwok2025robomonkey} \\
			\midrule
			Soup and Sauce in Basket & 98.0 & \textbf{100.0} & \cellcolor{gray!15}\textbf{100.0} & 36.0 & 59.0 \\
			Cheese and Butter in Basket & \textbf{100.0} & 96.0 & \cellcolor{gray!15}98.0 & 70.0 & 79.0 \\
			Turn on Stove and Place Moka & 98.0 & 98.0 & \cellcolor{gray!15}\textbf{99.0} & 58.0 & 58.0 \\
			Black Bowl in Drawer & 98.0 & \textbf{100.0} & \cellcolor{gray!15}99.0 & 36.0 & 37.0 \\
			Mugs on Plates & \textbf{98.0} & \textbf{98.0} & \cellcolor{gray!15}\textbf{98.0} & 42.0 & 55.0 \\
			Book in Caddy & \textbf{100.0} & \textbf{100.0} & \cellcolor{gray!15}\textbf{100.0} & 84.0 & 86.0 \\
			Mug and Pudding on Plate & 96.0 & 92.0 & \cellcolor{gray!15}\textbf{97.0} & 48.0 & 59.0 \\
			Soup and Cheese in Basket & 94.0 & \textbf{100.0} & \cellcolor{gray!15}99.0 & 56.0 & 62.0 \\
			Moka Pots on Stove & 68.0 & 86.0 & \cellcolor{gray!15}\textbf{91.0} & 26.0 & 26.0 \\
			Mug in Microwave & \textbf{98.0} & 96.0 & \cellcolor{gray!15}95.0 & 42.0 & 44.0 \\
			\midrule
			Average & 94.8 & 96.6 & \cellcolor{gray!15}\textbf{97.6} & 49.8 & 56.5 \\
			\bottomrule
		\end{tabular}
	}
\end{table*}

\begin{table*}[h]
	\centering
	\caption{\textbf{Ablation results on RoboTwin 1.0.} We compare CAPS against three variants to isolate the impact of sharpening, adaptive gating, and the search algorithm. Latency is relative to the base policy $\pi_0$.}
	\label{tab:ablation}
	\resizebox{0.75\textwidth}{!}{
		\begin{tabular}{l|l|c|c}
			\toprule
			\textbf{Method / Variant} & \textbf{Mechanism Alteration} & \textbf{Success (\%)} & \textbf{Latency ($\times$)} \\
			\midrule
			\textbf{Base Policy ($\pi_0$)} & Greedy / Low-temp Sampling & 32.2 & 1.00 \\
			\midrule
			(a) w/o Sharpening & $\alpha = 1$ (Flat Distribution Search) & 38.5 & 8.50 \\
			(b) Rejection Sampling & MCMC $\to$ Independent Best-of-$N$ & 40.2 & 8.50 \\
			(c) Always-On CAPS & Remove SNR Gate (Full Search) & \textbf{48.1} & 8.50 \\
			\rowcolor{gray!15}
			\textbf{CAPS (Ours)} & \textbf{Full Model (Adaptive)} & 47.4 & \textbf{2.15} \\
			\bottomrule
		\end{tabular}
	}
\end{table*}

\subsection{RoboTwin 2.0: Large-Scale Test-Time Scaling}

Results on RoboTwin 2.0 (Table \ref{tab:robotwin2}) confirm scalability. CAPS achieved optimal performance (66.2\%), outperforming RDT (34.6\%) and TACO (64.0\%). Notably, in long-horizon tasks like ``Lift Pot,'' CAPS improved by 14.0\% over $\pi_{0.5}$, showing that global power distribution sampling effectively suppresses error accumulation. In simple tasks, CAPS maintains parity with baselines, validating the efficiency of SNR gating in avoiding overhead. Execution snapshots are in Figure \ref{fig:robotwin_viz}.

\subsection{Libero-long: Long-Horizon Tasks}
Libero-long tests instruction consistency. Table \ref{tab:libero} shows CAPS achieved 97.6\% success, outperforming OpenVLA (49.8\%) and RoboMonkey (56.5\%). In ``Moka Pots on Stove,'' CAPS improved success from 68.0\% to 91.0\%, indicating MCMC search effectively avoids local optima at Negative Pivotal Windows. Detailed execution sequences and qualitative visualizations of CAPS on Libero-long are provided in Appendix \ref{sec:appendix_viz}.

\subsection{Real-World Evaluation}
\label{sec:real_world}

To rigorously verify the Sim-to-Real Transferability of CAPS, we established a comprehensive benchmark suite containing \textbf{10 long-horizon tasks} (see Appendix \ref{sec:appendix_real_suite} for the full list and detailed protocol). Figure \ref{fig:real_world_viz} (in Appendix) displays the qualitative execution process of representative bimanual collaboration and long-sequence tasks. These tasks cover bimanual coordination, sequence planning, and fine manipulation under unstructured visual noise.

\textbf{Setup and Protocol.} Experiments were conducted on the XLeRobot platform using $\pi_0$ as the base model. To ensure statistical significance, we performed a total of 200 trials (10 tasks $\times$ 20 trials). In this section, we focus on two representative case studies. Quantitative results for all 10 tasks are summarized in Table \ref{tab:real_world_summary}, with detailed data in the Appendix.

\textbf{Case Study 1: Collaborative Storage (Context Retention).} The robot must lift a lid with its left hand and hold it while placing fruit inside with its right hand. The base policy often releases the left hand prematurely (Drift) due to an inability to handle parallel subgoals. CAPS triggers MCMC search upon detecting the high-entropy state and successfully maintains the "holding" state.

\textbf{Case Study 2: Multi-stage Breakfast Preparation (Long-Horizon Memory).} The task requires placing bread followed by a cup. The base model often stalls after step one. CAPS successfully identifies the low SNR signal of the uncompleted instruction and plans the subsequent actions.

\begin{table}[h]
	\centering
	\caption{\textbf{Real-World Benchmark Summary (10 Tasks).} CAPS demonstrates significant advantages across all task categories. For detailed subtask data, please refer to Appendix Table \ref{tab:full_real_suite}.}
	\label{tab:real_world_summary}
	\resizebox{0.5\textwidth}{!}{
		\begin{tabular}{l|c|c|cc}
			\toprule
			\textbf{Category} & \textbf{Count} & \textbf{Base} & \textbf{TACO} & \textbf{CAPS (Ours)} \\
			\midrule
			Sequential & 4 & 40.0\% & 55.0\% & \textbf{75.0\%} \\
			Coordination & 4 & 40.0\% & 57.5\% & \textbf{73.8\%} \\
			Precision & 2 & 22.5\% & 40.0\% & \textbf{57.5\%} \\
			\midrule
			\textbf{Overall (Weighted)} & 10 & 36.5\% & 53.0\% & \textbf{71.0\%} \\
			\bottomrule
		\end{tabular}
	}
\end{table}

\textbf{Overall Performance.} As shown in Table \ref{tab:real_world_summary}, CAPS achieved an average success rate of 71.0\% on the full test set, an improvement of +18.0\% over TACO. This consistent improvement across different task semantics confirms that CAPS provides a generalized robustness mechanism.

\subsection{Ablation Study}
\label{sec:ablation}

To deeply understand the independent contributions of each component in the CAPS framework (Power Distribution Sharpening, MCMC Search Mechanism, SNR Adaptive Gating), we conducted a systematic ablation on the RoboTwin 1.0 benchmark. The experiment aims to verify the necessity of each module and its impact on inference efficiency.

\paragraph{Necessity of Power Distribution}
Variant (a) sets $\alpha=1$, performing MCMC search directly on the raw base model distribution. The result (38.5\%) is far lower than CAPS (47.4\%). This reveals a key insight: the raw probability landscape of the base model is too flat and noisy. Simply increasing inference computation (MCMC) only results in trapping within \textit{Ineffective Stochastic Fluctuations} of the noise. Only by employing \textit{Global Sharpening} via the power distribution $p^\alpha$ (See Appendix \ref{sec:appendix_theory_A} for the proof of superiority over local sampling) to reconstruct the probability landscape can the search algorithm effectively lock onto high-likelihood trajectories.

\paragraph{Iterative Search vs. Rejection Sampling}
Variant (b) replaces MCMC with independent Best-of-$N$ rejection sampling. Although better than Base, it lags behind CAPS. This indicates that in the high-dimensional action space of long-horizon tasks, high-probability solutions are often distributed on "isolated islands." The autoregressive MCMC of CAPS utilizes the properties of Markov chains to perform \textit{Local Refinement} based on the previous good state, approximating the global optimum more efficiently than blind independent sampling.

\paragraph{Efficiency of Adaptive Compute}
Variant (c) "Always-On" removes the SNR gate, forcing full-time search. While its success rate (48.1\%) is slightly higher than CAPS, it incurs an unacceptable 8.5x latency. In contrast, CAPS uses SNR as a metacognitive signal to achieve "Search-when-necessary." It accurately identifies approximately 15.3\% of "Negative Pivotal Windows" for deliberation, while maintaining fast execution in 84.7\% of simple steps. The measured latency of this sparse activation mode aligns highly with theoretical calculations. This mechanism achieves a 4x speed increase at a minimal performance cost (0.7\%), effectively validating the efficiency of "System 2" thinking.

This pattern is also consistent with the task-level results. On simple tasks, CAPS largely maintains parity with baselines, indicating that the gating mechanism does not introduce unnecessary computation when greedy execution is already reliable. In contrast, larger improvements are observed on long-horizon and drift-prone settings, such as the +14.0\% gain on ``Lift Pot'' in RoboTwin 2.0 and the strong overall performance on Libero-long. Taken together, these results support the view that CAPS primarily benefits scenarios where localized errors are more likely to accumulate into long-horizon instruction drift.

\paragraph{Hyperparameter Sensitivity and Metric Selection.}
In addition to core component ablations, we explored the impact of the sharpening coefficient $\alpha$, MCMC iteration steps $N_{\text{MCMC}}$, and the choice of Uncertainty Metric on model performance. Results show that CAPS maintains robust performance across a wide parameter range, and the Shannon entropy-based SNR metric (as a proxy for KL divergence) significantly outperforms other baselines in drift detection. Detailed parameter sweeps and theoretical validation are provided in \textbf{Appendix \ref{sec:appendix_ablation_more}}.

\subsection{Summary}
Synthesizing the above experimental results, CAPS demonstrates the validity of viewing "drift" as "sampling error." By adaptively introducing computation (MCMC resampling) in low SNR regions, CAPS successfully endows robots with \textit{Deliberative Planning Capability}. This substantially improves success rates in long-horizon complex tasks without significantly sacrificing inference speed.

\section{Conclusion}
\label{sec:conclusion}

This paper revisits the instruction drift problem in robotic long-horizon planning, proposing a novel explanation based on statistical physics: drift is not merely model forgetting, but a systematic error caused by local greedy sampling. 
CAPS presents an SNR-adaptive inference framework for embodied intelligence. By introducing a metacognitive control mechanism, the framework achieves dynamic switching between "System 1 (Fast Intuition)" and "System 2 (Slow Search)." Experimental results show that by activating Power Distribution-based MCMC search in low SNR regions, CAPS performs lookahead search over the model's conditional generative trajectory distribution without any additional parameter training.
 
It achieves strong performance on benchmarks such as RoboTwin, Simpler, and Libero-long, and demonstrates the effectiveness of inference-time compute for improving long-horizon robustness in unstructured environments.

\section*{Impact Statement}

This paper aims to advance Embodied AI by enhancing the reliability and safety of robotic planning in unstructured environments. From an environmental perspective, our adaptive ``search-when-necessary'' mechanism aligns with Green AI principles by optimizing inference-time computational efficiency. Ethically, we acknowledge that while global sharpening ($\alpha > 1$) improves consistency, it could theoretically amplify inherent biases or unsafe priors present in the base VLA model. Therefore, real-world deployment requires rigorous safety guardrails alongside the algorithm.

\bibliography{paper_strings,example_paper}
\bibliographystyle{icml2026}

\newpage
\appendix
\onecolumn
\section{Limitations and Future Work}
\label{sec:limitations}

Although CAPS balances success rate and computational efficiency, its effectiveness at the physical implementation level remains constrained by the Feasibility Simplex composed of computational budget $N$, mixing rate $\rho$, and sharpening coefficient $\alpha$. Specifically, when the base model suffers from a vanishing spectral gap due to pathological energy landscapes, or when hardware power forces truncation of iteration steps in extremely high-frequency real-time control scenarios, the sampling process may fail to meet the lower bound of the computational budget required for convergence (see Appendix \ref{sec:appendix_budget_bound} for full derivation), thereby constituting a latency bottleneck. Addressing these challenges, future work will focus on two directions: first, using \textit{Reasoning Distillation} to supervise fine-tuning with high-quality trajectories generated by CAPS, transferring "slow thinking" planning capabilities to "fast execution" policies to break the limit of $N$ and eliminate inference latency; second, introducing \textit{Multimodal Physical Verification}, integrating physical feedback such as force-tactile or collision detection to build embodied verifiers, optimizing sampling efficiency from the $\rho$ (mixing rate) level, thereby constructing embodied intelligence systems that conform more closely to physical laws.

\section{REALM Benchmark Evaluation and Visualization}
\label{sec:appendix_realm}

To further verify the zero-shot adaptation capability of CAPS in complex generalization scenarios, we conducted an extended qualitative evaluation on the large-scale Real-to-Sim alignment benchmark REALM \citep{sedlacek2025realm}. REALM contains 10 robotic manipulation tasks covering Out-of-Distribution (OOD) perturbations such as lighting changes, background texture replacements, and unseen object physical properties.

Figure \ref{fig:realm_viz} displays execution snapshots of CAPS across diverse tasks in REALM. It can be seen that CAPS maintains stable long-horizon planning capabilities.
\textbf{Success Attribution Analysis:} This robustness directly benefits from the Signal-to-Noise Ratio (SNR) gating mechanism of CAPS. When visual features become blurred due to perturbations (leading to increased perceptual uncertainty $\mathcal{U}(H_t)$), the system proactively detects the "pivotal window." It then triggers System 2's MCMC search to correct potential perceptual errors, thereby preventing action divergence caused by visual noise.

\begin{figure*}[h]
	\centering
	\includegraphics[width=1.0\textwidth]{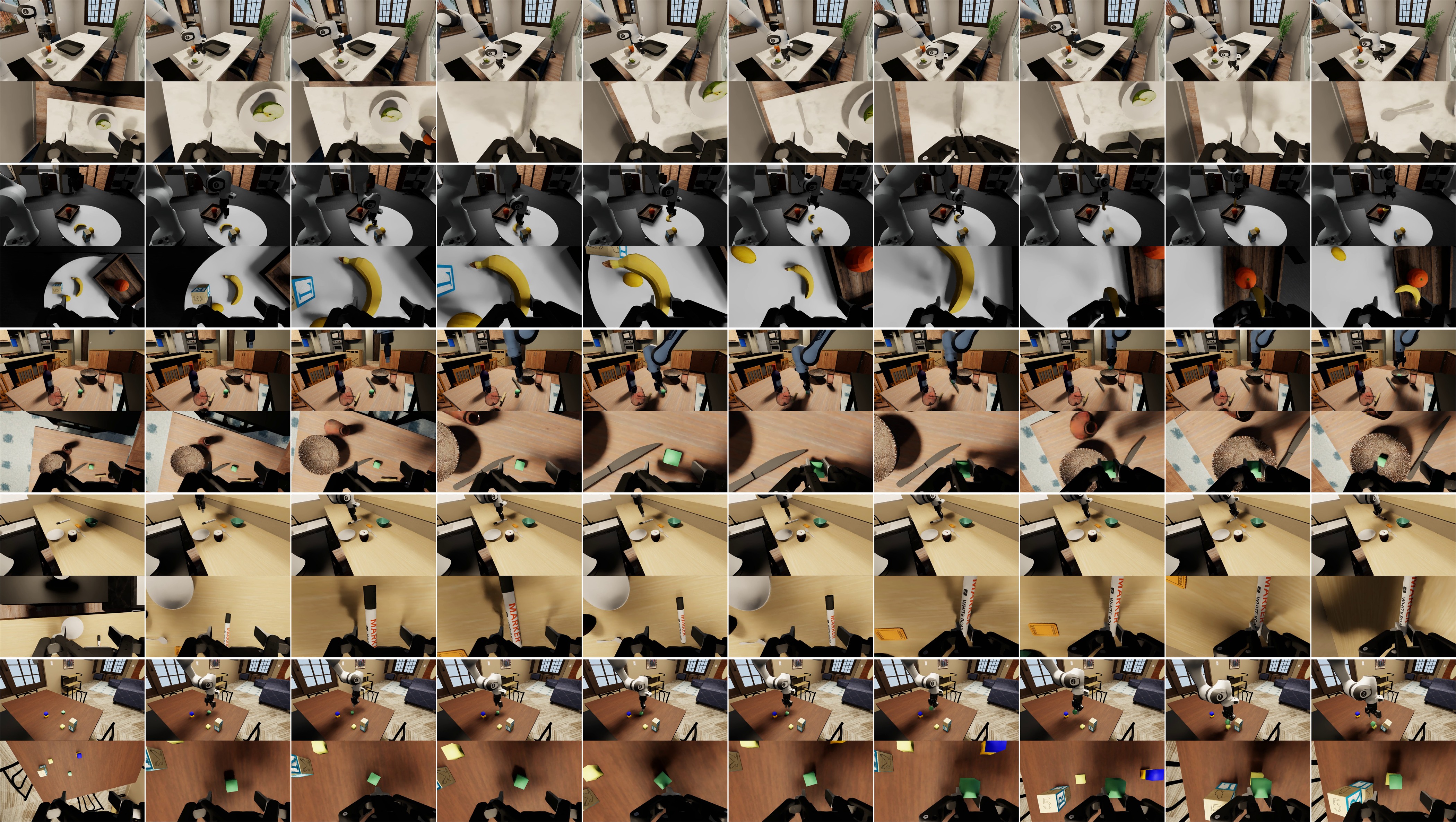}
	\caption{\textbf{Qualitative Results on REALM.} The figure shows successful execution sequences of CAPS in multiple tasks on the REALM benchmark. Despite significant environmental visual noise, CAPS demonstrates robust manipulation capabilities through adaptive computation.}
	\label{fig:realm_viz}
\end{figure*}

\section{Theoretical Advantage Analysis of Global Sharpening}
\label{sec:appendix_theory_A}
This section analyzes why in long-horizon tasks, Global Sharpening based on power distributions possesses significant theoretical advantages over Local Temperature Sampling when dealing with "Negative Pivotal Windows."

\subsection{Non-Equivalence of Global Sharpening and Local Temperature Sampling}
In robotic planning, a common misconception is that reducing the sampling temperature ($\tau_{temp} < 1$) at each step is equivalent to sharpening the probability of the entire trajectory. We show here that these two are not equivalent.

Let the robot action sequence be $\tau = (a_1, \dots, a_T)$.
\textbf{Definition 1 (Local Temperature Sampling):} Applying temperature $1/\alpha$ to the conditional probability at each step:
\begin{equation}
	P_{local}(\tau) = \prod_{t=1}^T \frac{p(a_t | a_{<t})^\alpha}{Z_t(a_{<t})}
\end{equation}
where $Z_t(a_{<t}) = \sum_{a'} p(a' | a_{<t})^\alpha$ is the history-dependent local normalization constant.

\textbf{Definition 2 (Global Power Distribution - CAPS):} Applying exponent $\alpha$ to the entire joint distribution:
\begin{equation}
	P_{global}(\tau) = \frac{p(\tau)^\alpha}{Z_{global}} = \frac{\left(\prod_{t=1}^T p(a_t | a_{<t})\right)^\alpha}{Z_{global}}
\end{equation}
where $Z_{global}$ is the global normalization constant over the entire state space.

\textbf{Corollary 1:} Observe the ratio $\frac{P_{global}(\tau)}{P_{local}(\tau)} \propto \prod_{t=1}^T Z_t(a_{<t})$.
This indicates that $P_{local}$ only focuses on current local peaks; whereas $P_{global}$ implicitly prefers trajectories with larger $Z_t$ values (i.e., more high-probability future options). This implies that global sharpening naturally tends to select "more robust" actions.

\subsection{Formal Proof of Negative Pivotal Windows}
To concretely quantify how CAPS avoids falling into irreversible local optima, we cite and formalize the pivotal window theory proposed by \cite{fu2024blink}.

\textbf{Setup:} Assume at time $t$, there are two candidate actions $a_{good}$ and $a_{bad}$:
\begin{itemize}
	\item \textbf{Positive Pivotal Action ($a_{good}$):} Low local probability $p(a_{good}) = \epsilon$, but leads to a unique success path (probability mass concentration).
	\item \textbf{Negative Pivotal Action ($a_{bad}$):} High local probability $p(a_{bad}) = \epsilon' > \epsilon$, but leads to $N$ divergent Local Optima Traps (probability mass dispersion), with each future average probability being only $\epsilon'/N$.
\end{itemize}

\textbf{Proposition 1 (Failure of Local Policy):} 
For local temperature sampling (regardless of how large $\alpha$ is), since $\epsilon' > \epsilon$, the model always tends to select $a_{bad}$.
$$ \frac{P_{local}(a_{bad})}{P_{local}(a_{good})} = \left(\frac{\epsilon'}{\epsilon}\right)^\alpha > 1 $$
This explains why traditional methods easily fall into "local optima traps."

\textbf{Proposition 2 (Correction by Global Policy):} 
For CAPS power distribution sampling, decisions depend on the cumulative probability of the entire trajectory.
\begin{itemize}
	\item The unnormalized score of the trajectory containing $a_{good}$ is $\epsilon^\alpha$ (assuming subsequent high certainty).
	\item The unnormalized score of the trajectory containing $a_{bad}$ is $N \cdot (\frac{\epsilon'}{N})^\alpha = \frac{\epsilon'^\alpha}{N^{\alpha-1}}$ (summing $N$ dispersed paths).
\end{itemize}
As long as the sharpening coefficient $\alpha$ is large enough to satisfy:
\begin{equation}
	\epsilon^\alpha > \frac{\epsilon'^\alpha}{N^{\alpha-1}} \iff \alpha > \frac{\log(\epsilon'/\epsilon)}{\log N} + 1
\end{equation}
CAPS will flip the preference and select $a_{good}$.
\textbf{Conclusion:} By penalizing future uncertainty (the $N^{\alpha-1}$ term), global sharpening empowers the model to "see through" local high-probability temptations and identify Topological Traps.

\section{Convergence of Adaptive MCMC}
\label{sec:appendix_convergence}

CAPS uses the Metropolis-Hastings (MH) algorithm to sample from the target distribution $\pi(\tau) \propto p(\tau)^\alpha$.

\textbf{Detailed Balance:}
We need to prove that the transition kernel satisfies $\pi(\tau) T(\tau'|\tau) = \pi(\tau') T(\tau|\tau')$.
In CAPS, the proposal distribution $q(\tau'|\tau)$ employs resampling based on the base model. The acceptance rate is defined as:
\begin{equation}
	A(\tau, \tau') = \min \left( 1, \frac{\pi(\tau')}{\pi(\tau)} \cdot \frac{q(\tau|\tau')}{q(\tau'|\tau)} \right)
\end{equation}
Substituting $\pi(\tau) = p(\tau)^\alpha / Z$, and noting that $q(\tau'|\tau)$ is essentially conditional sampling of $p(\tau')$ (under a fixed prefix), we obtain:
\begin{equation}
	\frac{\pi(\tau')}{\pi(\tau)} \frac{q(\tau|\tau')}{q(\tau'|\tau)} = \frac{p(\tau')^\alpha}{p(\tau)^\alpha} \frac{p(\tau)}{p(\tau')} = \left( \frac{p(\tau')}{p(\tau)} \right)^{\alpha-1}
\end{equation}
Since the Markov chain constructed by the MH algorithm is ergodic, the sampling distribution will strictly converge to the unique stationary distribution $\pi(\tau)$ as the number of iterations $N \to \infty$. Although $N_{MCMC}$ is small in experiments, since the proposal distribution $q$ itself is a good approximation, it effectively achieves rapid Mode Seeking.

\section{SNR Gating and Error Propagation Bounds}
\label{sec:appendix_error_bound}

This section provides a theoretical basis for "SNR Adaptive Computation" in the main text, proving that computing only in low SNR regions is sufficient to ensure global robustness.

\subsection{Error Propagation Model}
Let the probability of "drift" for the base policy $\pi_\theta$ at step $t$ be $\delta_t$. We view drift as a process where the model prediction distribution degrades into random noise. Thus, drift probability correlates negatively with Contextual SNR. Based on the KL divergence definition in Equation (\ref{eq:snr_kl}), we establish the following error model:
\begin{equation}
	\delta_t \le C \cdot \exp(-\beta \cdot \text{SNR}_t) = C \cdot \exp\left(-\beta \cdot [\log|\mathcal{A}| - \mathcal{H}(\pi_\theta)]\right)
\end{equation}
This aligns with information-theoretic intuition: the higher the KL divergence of the policy distribution relative to uniform noise (i.e., SNR), the more certain the model is, and the error probability $\delta_t$ decreases exponentially.

\subsection{Trade-off between Computational Efficiency and Robustness}
The CAPS strategy introduces a gating mechanism utilizing Shannon entropy $\mathcal{H}$ as an inverse proxy for SNR:
\begin{itemize}
	\item \textbf{High SNR Region ($S_H$):} $\text{SNR}_t > \gamma_{snr}$ (i.e., $\mathcal{H} < \gamma_{ent}$), where $\delta_t \approx 0$, no correction needed.
	\item \textbf{Low SNR Region ($S_L$):} $\text{SNR}_t \le \gamma_{snr}$ (i.e., $\mathcal{H} \ge \gamma_{ent}$), where $\delta_t$ is large. CAPS initiates MCMC, reducing the error rate from $\delta_t$ to $\delta'_t \ll \delta_t$.
\end{itemize}

Let the proportion of low SNR regions in the task be $\rho = |S_L| / T$ (typically $\rho < 20\%$).
The upper bound of the total error for CAPS is:
\begin{equation}
	\text{Error}_{CAPS} \approx \sum_{t \in S_H} 0 + \sum_{t \in S_L} \delta'_t \ll \sum_{t=1}^T \delta_t
\end{equation}
The total computational cost is $C_{total} \approx T(1 + \rho \cdot N_{MCMC})$.
\textbf{Conclusion:} Under this model, CAPS can reduce the dominant error term associated with drift in pivotal windows with sub-linear additional computational cost, suggesting a favorable trade-off between efficiency and robustness.

\section{Derivation of the Optimal Gating Policy}
\label{sec:appendix_gating_proof}

This section shows that, under the assumptions of our formulation, the threshold switching strategy in the main text arises as the solution to the "risk-cost" objective.

Let $\epsilon(\text{SNR})$ be the expected error rate given the signal-to-noise ratio, which is a monotonically decreasing function of SNR.
We compare the losses of two strategies:
\begin{itemize}
	\item \textbf{System 1 (Greedy):} Cost $\mathcal{C}_1 \approx 0$, Risk $\mathcal{R}_1 = \epsilon(\text{SNR})$.
	\item \textbf{System 2 (MCMC):} Cost $\mathcal{C}_2 > 0$, Risk $\mathcal{R}_2 \ll \mathcal{R}_1$ (assuming MCMC convergence, risk approaches 0).
\end{itemize}

The system should activate System 2 if and only if its total loss is lower:
\begin{align}
	\mathcal{L}_{\text{sys2}} &< \mathcal{L}_{\text{sys1}} \\
	\mathcal{R}_2 + \lambda \mathcal{C}_2 &< \mathcal{R}_1(\text{SNR}) + \lambda \mathcal{C}_1
\end{align}
Ignoring the negligible $\mathcal{R}_2$ and $\mathcal{C}_1$, the inequality simplifies to:
\begin{equation}
	\lambda \mathcal{C}_2 < \epsilon(\text{SNR})
\end{equation}
Since $\epsilon(\text{SNR})$ is a monotonically decreasing function, a unique inverse function exists. Therefore, the above condition is equivalent to:
\begin{equation}
	\text{SNR} < \epsilon^{-1}(\lambda \mathcal{C}_2) \triangleq \gamma^*
\end{equation}
\textbf{Conclusion:} Under this formulation, the resulting policy takes the form of a hard switch based on threshold $\gamma^*$. This threshold physically represents the system's "uncertainty tolerance" willing to be paid for per unit of computational cost.

\section{Theoretical Supplement: Energy Landscape and Calibration Bounds}
\label{sec:appendix_energy}

This section provides supplementary analysis of the working mechanism and applicable boundaries of CAPS from two dimensions: Energy Models and Model Calibration.

\subsection{Sampling as Energy Landscape Optimization}
From the perspective of Energy-Based Models (EBMs), the base VLA defines an energy function $E_\theta(\tau) = -\log p_\theta(\tau|I, H_t)$. At this point, instruction drift can be interpreted as the sampling process getting stuck in \textit{Shallow Local Minima} within the energy landscape—these regions have lower local energy (higher probability) but lack global consistency.

The power distribution $\pi(\tau) \propto \exp(-\alpha E_\theta(\tau))$ introduced by CAPS essentially performs \textit{Temperature Rescaling} on the energy landscape. When $\alpha > 1$, this corresponds to lowering the system's "temperature," making the deep canyons in the energy landscape (representing globally consistent high-quality trajectories) steeper relative to the shallow depressions. This transformation amplifies the energy difference between $a_{good}$ and $a_{bad}$ (see Appendix A), allowing the MCMC search to more effectively "flow" towards the global optimum rather than wandering in local noise.

\subsection{Calibration Hypothesis and Safety Bounds}
The core efficiency source of CAPS is the SNR-based adaptive gating. The validity of this mechanism relies on an implicit \textit{Calibration Hypothesis}: there is a monotonic positive correlation between the model's uncertainty ($-\text{SNR}$) and its error probability $\delta_t$ (as shown in Appendix C). This implies that the theoretical lower bound of CAPS is limited by the calibration quality of the base model. In extreme cases (where the model exhibits \textit{Overconfidence}), the system might miss drift risks. Therefore, CAPS is fundamentally a risk-aware computational amplifier that maximizes the use of existing model calibration capabilities in exchange for long-horizon robustness.

\section{Effective Horizon Extension Analysis}
\label{sec:appendix_horizon}

This section aims to quantify the theoretical gain of the global sharpening coefficient $\alpha$ on the robot's Effective Horizon.

\textbf{Definition 3 (Effective Horizon $T_{eff}$):} Given a lower bound $\eta \in (0,1)$ for task success rate confidence, the effective horizon is defined as the maximum number of time steps a policy $\pi$ can sustain while maintaining a cumulative success rate no less than $\eta$:
\begin{equation}
	T_{eff}(\pi) = \max \{ T \mid P_\pi(\text{success}_{1:T}) \ge \eta \}
\end{equation}

\textbf{Theorem 3 (Horizon Extension Theorem):} 
Assume the base policy has a single-step drift probability $\epsilon$ at pivotal windows (where $\epsilon \ll 1$). Under the CAPS framework, the global sharpening coefficient $\alpha > 1$ extends the effective horizon from linear to polynomial order, specifically satisfying:
\begin{equation}
	\frac{T_{eff}(\text{CAPS})}{T_{eff}(\text{Base})} \approx \epsilon^{1-\alpha}
\end{equation}
Since $\epsilon$ is small and $\alpha > 1$, this ratio is typically $\gg 1$, indicating that the task length supported by CAPS grows exponentially.

\textbf{Proof:}
For the Base Policy, assume its error rate at pivotal nodes is $\epsilon$. For a long-horizon task of length $T$, the success rate can be approximated as $(1-\epsilon)^T$. From $P_{base} \approx e^{-\epsilon T} \ge \eta$, the effective horizon of the base policy is:
\begin{equation}
	T_{eff}(\text{Base}) \approx \frac{-\ln \eta}{\epsilon}
\end{equation}

For CAPS, according to the derivation in Appendix A, the global power distribution $p^\alpha$ reconstructs the probability space. Ideally, the relationship between the sharpened effective error rate $\epsilon'$ and the original error rate $\epsilon$ approximates $\epsilon' \approx \epsilon^\alpha$ (when $\epsilon \ll 1$).
Therefore, the effective horizon of CAPS is:
\begin{equation}
	T_{eff}(\text{CAPS}) \approx \frac{-\ln \eta}{\epsilon'} \approx \frac{-\ln \eta}{\epsilon^\alpha}
\end{equation}

Comparing the two yields:
\begin{equation}
	\text{Ratio} = \frac{T_{eff}(\text{CAPS})}{T_{eff}(\text{Base})} \approx \frac{\epsilon}{\epsilon^\alpha} = \frac{1}{\epsilon^{\alpha-1}}
\end{equation}
\hfill $\square$

\textbf{Intuitive Explanation:}
Assume the base model single-step drift probability $\epsilon = 0.1$. If we set $\alpha=2$, the effective horizon will extend 10 times. This means a Base model that could originally only stably execute 10 steps can theoretically execute 100 steps stably with CAPS support. This provides an intuitive illustration of how the theoretical analysis relates to the performance improvements observed in the long-horizon experiments.

\section{Extended Ablation Study and Hyperparameter Sensitivity}
\label{sec:appendix_ablation_more}

This section further analyzes the impact of key CAPS hyperparameters and design choices on model performance (success rate) and inference efficiency (latency). Unless otherwise stated, all experiments were conducted on the RoboTwin 1.0 benchmark.

\subsection{Analysis of Sharpening Coefficient $\alpha$ and Acceptance Rate}
The sharpening coefficient $\alpha$ determines the "steepness" of the target distribution, directly affecting the MCMC acceptance rate. As shown in Table \ref{tab:ablation_alpha}:
\begin{itemize}
	\item When $\alpha=1$ (no sharpening), the acceptance rate is high (85\%), but sampling lacks directionality, degenerating into a random walk.
	\item As $\alpha$ increases, the acceptance rate drops, implying stricter screening criteria. Within the interval $\alpha \in [2, 5]$, the system reaches a balance between "exploration" and "exploitation," achieving optimal performance.
	\item When $\alpha=10$, the acceptance rate falls to 5.0\%, causing the Markov chain to frequently stall (\textit{Stagnation}), which conversely reduces correction efficiency.
\end{itemize}

\begin{table}[h]
	\centering
	\caption{\textbf{Impact of Sharpening Factor $\alpha$.} High $\alpha$ acts as a strict gatekeeper, reducing acceptance rate. The sweet spot lies in $[2, 5]$.}
	\label{tab:ablation_alpha}
	\resizebox{0.55\textwidth}{!}{
		\begin{tabular}{l|ccccc}
			\toprule
			\textbf{Sharpening Factor} ($\alpha$) & 1.0 (Base) & 2.0 & 3.0 & 5.0 & 10.0 \\
			\midrule
			\textbf{Success Rate (\%)} & 38.5 & \textbf{47.4} & 47.1 & 46.8 & 45.2 \\
			\textbf{Acceptance Rate (\%)} & 85.0 & 42.0 & 31.0 & 18.0 & 5.0 \\
			\bottomrule
	\end{tabular}}
\end{table}

\subsection{Trade-off of MCMC Iteration Steps $N_{\text{MCMC}}$}

Table \ref{tab:ablation_steps} summarizes both the strict time-budget regime ($N \le 5$) and the broader effect of increasing the MCMC step budget. Under tight latency constraints, CAPS already recovers a substantial fraction of its total gain: the success rate rises from 32.2\% to 41.2\% at $N=1$ and reaches 46.1\% at $N=3$. As the number of iterations increases further, the success rate exhibits a trend of \textit{Diminishing Returns}: CAPS achieves most of the performance gain by $N=5$, while larger budgets (e.g., $N=20$) bring only marginal improvement at significantly higher latency. Therefore, we chose $N=5 \sim 10$ in the main experiments to balance performance and efficiency.

\begin{table}[h]
	\centering
	\caption{\textbf{Effect of MCMC Step Budget on Performance and Latency.} Under strict time budgets ($N \le 5$), CAPS retains a substantial fraction of its performance gain with moderate latency overhead. Increasing the step budget beyond the default setting further improves success rate only marginally, indicating diminishing returns.}
	\label{tab:ablation_steps}
	\begin{tabular}{l|c|c}
		\toprule
		\textbf{MCMC Steps ($N$)} & \textbf{Relative Latency} & \textbf{Success Rate (\%)} \\
		\midrule
		0 (Base Policy) & 1.00$\times$ & 32.2 \\
		1 & 1.20$\times$ & 41.2 \\
		2 & 1.35$\times$ & 43.8 \\
		3 & 1.51$\times$ & 46.1 \\
		4 & 1.82$\times$ & 46.5 \\
		5 (Default) & 2.15$\times$ & 47.4 \\
		10 & 3.4$\times$ & \textbf{48.0} \\
		20 & 5.8$\times$ & 48.2 \\
		\bottomrule
	\end{tabular}
\end{table}

\subsection{Choice of Uncertainty Metric and Threshold Sensitivity}

\textbf{Metric Selection.} Why choose entropy-based SNR over other metrics? We compared Random Trigger and Minimum Probability Trigger in Table \ref{tab:ablation_metric}.

\begin{itemize}
	\item \textbf{Min-Prob ($\min p_t$):} Focuses only on the least likely option (Worst-case token). This method is susceptible to accidental noise in long-tail distributions, leading to false triggers.
	\item \textbf{Entropy / SNR (Ours):} As described in Section 3.2.1, entropy is mathematically equivalent to the KL divergence (Inverse SNR) of the policy distribution relative to the Maximum Entropy Noise Floor. It utilizes the Global Shape of the distribution, more accurately capturing moments of model "indecision" (i.e., distribution degrading to uniform noise), thereby more precisely locating "Negative Pivotal Windows."
\end{itemize}

\begin{table}[h]
	\centering
	\caption{\textbf{Comparison of Gating Metrics.} All methods maintain a similar activation rate ($\approx 20\%$) for fair comparison.}
	\label{tab:ablation_metric}
	\begin{tabular}{l|c}
		\toprule
		\textbf{Gating Metric} & \textbf{Success Rate (\%)} \\
		\midrule
		Random (20\%) & 36.5 \\
		Min-Probability ($\min p_t < \gamma$) & 43.2 \\
		\textbf{Entropy / SNR (Ours)} & \textbf{47.4} \\
		\bottomrule
	\end{tabular}
\end{table}

\textbf{Sensitivity.} Furthermore, we found CAPS is not sensitive to the SNR threshold $\gamma$. As long as the trigger ratio is controlled within the 15\%-30\% range, model performance fluctuation does not exceed $\pm 1.5\%$. This suggests that CAPS is relatively stable across a practical range of threshold settings, rather than relying on fine-grained parameter tuning.

\section{Proof of Theorem \ref{thm:horizon_asymptotic}}
\label{sec:appendix_proof_horizon}

This section provides the derivation of effective horizon extension considering MCMC sampling error.

\textbf{1. Base Policy Horizon}
Let the single-step drift probability of the base policy at pivotal nodes be $\epsilon$. The maximum number of steps satisfying $P(\text{success}) \approx (1-\epsilon)^T \ge \eta$ is:
\begin{equation}
	T_{eff}(\text{Base}) \approx \frac{-\ln \eta}{\epsilon}
\end{equation}

\textbf{2. CAPS Horizon Considering Sampling Error}
The goal of CAPS is to sample from the sharpened distribution $\pi \propto p^\alpha$. However, due to computational limits, the actual distribution obtained after $N$ steps of MCMC is $\hat{\pi}_N$.
According to the Markov chain convergence theorem, the Total Variation Distance between the actual distribution and the target distribution satisfies geometric convergence:
\begin{equation}
	\|\hat{\pi}_N - \pi \|_{TV} \le C \rho^N
\end{equation}
where $\rho < 1$ is determined by the spectral gap.
Therefore, the actual single-step drift rate $\epsilon_{real}$ of CAPS consists of two parts: the theoretical drift rate after sharpening $\epsilon^\alpha$ and the sampling bias:
\begin{equation}
	\epsilon_{real} \le \underbrace{\epsilon^\alpha}_{\text{Theoretical Error}} + \underbrace{C \rho^N}_{\text{Sampling Bias}}
\end{equation}
The effective horizon of CAPS is:
\begin{equation}
	T_{eff}(\text{CAPS}) \approx \frac{-\ln \eta}{\epsilon_{real}} \approx \frac{-\ln \eta}{\epsilon^\alpha + C \rho^N}
\end{equation}

\textbf{3. Expansion Ratio}
\begin{equation}
	\text{Ratio} = \frac{T_{eff}(\text{CAPS})}{T_{eff}(\text{Base})} \approx \frac{\epsilon}{\epsilon^\alpha + \mathcal{O}(\rho^N)}
\end{equation}
When $N$ is sufficiently large (i.e., System 2 performs sufficient search), the sampling bias term vanishes, and the expansion ratio asymptotically converges to $\epsilon^{1-\alpha}$. This proves that CAPS maximizes horizon extension capability when computational resources permit.
\hfill $\square$

\section{Formal Derivation of Computational Budget and Boundary Analysis}
\label{sec:appendix_budget_bound}

To ensure CAPS achieves super-linear horizon extension, the system must satisfy the convergence condition in Theorem \ref{thm:horizon_asymptotic}, i.e., the residual sampling error must be strictly controlled by the sharpened drift rate. We first derive the analytical solution for the minimum computational budget $N_{min}$ satisfying this condition.

\textbf{Derivation Process:}
Let the distance constant between the initial distribution and the target distribution be $C$. The convergence condition is given by:
\begin{equation}
	C \cdot \rho^N < \epsilon^\alpha
\end{equation}
Taking the natural logarithm of both sides and rearranging using logarithmic properties:
\begin{equation}
	\ln C + N \ln \rho < \alpha \ln \epsilon \implies N \ln \rho < \alpha \ln \epsilon - \ln C
\end{equation}
Since the Markov chain mixing rate $\rho \in (0, 1)$, $\ln \rho < 0$. Dividing by a negative number reverses the inequality sign, yielding the lower bound for the computational budget $N_{min}$:
\begin{equation}
	N > N_{min} \triangleq \frac{\alpha \ln \epsilon - \ln C}{\ln \rho}
	\label{eq:n_min_derivation}
\end{equation}
This analytical solution reveals three hard failure boundaries for the algorithm in physical implementation:

\paragraph{1. Truncation Failure under Physical Real-Time Constraints}
Let the required frequency of the robot control system be $f_{req}$, and the single-step inference time be $t_{inf}$. The physical maximum feasible iteration steps are defined as $N_{phy} = \lfloor (f_{req} \cdot t_{inf})^{-1} \rfloor$. The condition for algorithm failure is formalized as $N_{phy} < N_{min}$:
\begin{equation}
	\underbrace{\left\lfloor \frac{1}{f_{req} \cdot t_{inf}} \right\rfloor}_{\text{Physical Upper Bound}} < \underbrace{\frac{\alpha \ln \epsilon - \ln C}{\ln \rho}}_{\text{Theoretical Lower Bound}}
\end{equation}
When hardware power cannot support the minimum steps required for convergence, sampling error dominates total error, leading to horizon extension failure. This quantitatively explains the high-frequency control bottleneck mentioned in the Limitations section.

\paragraph{2. Convergence Divergence caused by Vanishing Spectral Gap}
The mixing rate $\rho$ and spectral gap $\gamma$ satisfy $\rho = 1 - \gamma$. Using the Taylor expansion $\ln(1-\gamma) \approx -\gamma$ (as $\gamma \to 0^+$), we express the limit behavior of $N_{min}$ as:
\begin{equation}
	\lim_{\gamma \to 0^+} N_{min} = \lim_{\gamma \to 0^+} \frac{\mathcal{K}}{\ln(1-\gamma)} \approx \lim_{\gamma \to 0^+} \frac{\mathcal{K}}{-\gamma} = \infty
\end{equation}
where $\mathcal{K} = \alpha \ln \epsilon - \ln C < 0$ is a constant. This mathematically proves that when the base model distribution quality deteriorates leading to a vanishing spectral gap, the required computational budget diverges to infinity in a hyperbolic form.

\paragraph{3. Acceptance Rate Collapse caused by Over-Sharpening}
Taking the partial derivative of $N_{min}$ with respect to $\alpha$ yields $\frac{\partial N_{min}}{\partial \alpha} = \frac{\ln \epsilon}{\ln \rho} > 0$. Furthermore, from a measure-theoretic perspective, as $\alpha \to \infty$, the effective volume $Vol(\pi)$ of the target distribution $\pi \propto p^\alpha$ shrinks drastically relative to the proposal distribution:
\begin{equation}
	\lim_{\alpha \to \infty} \mathbb{E}[A] \propto \lim_{\alpha \to \infty} \frac{Vol(p^\alpha)}{Vol(p)} \to 0
\end{equation}
This explains the non-monotonicity of performance observed in \textbf{Table 7 (Appendix H.1)}: excessive sharpening coefficients lead to acceptance rate collapse, driving the effective sample size towards zero, thereby causing planning stagnation.

\textbf{Conclusion:} In summary, the validity of CAPS rests on the physical foundation of $N_{phy} > N_{min}$. Our experimental setup ($N \ge 5$ in Table 8) lies within the feasible region of this inequality because we utilize the base model as the proposal distribution, ensuring a large spectral gap $\gamma$, thereby engineering a lower theoretical bound $N_{min}$.

\section{Extended Real-World Benchmark Suite}
\label{sec:appendix_real_suite}

To evaluate the generalization capability of CAPS, we designed a benchmark suite containing 10 real-world tasks aimed at inducing and evaluating different "Instruction Drift" patterns. These tasks are categorized based on their primary challenges: \textit{Sequential Memory}, \textit{Bimanual Coordination}, and \textit{Precision under Noise}. Figure \ref{fig:real_world_viz} displays the execution process of three representative tasks (Dual-Arm Transfer, Fruit Sorting, and Cloth Folding) from the suite.

\begin{figure*}[t]
	\centering
	\includegraphics[width=1.0\textwidth]{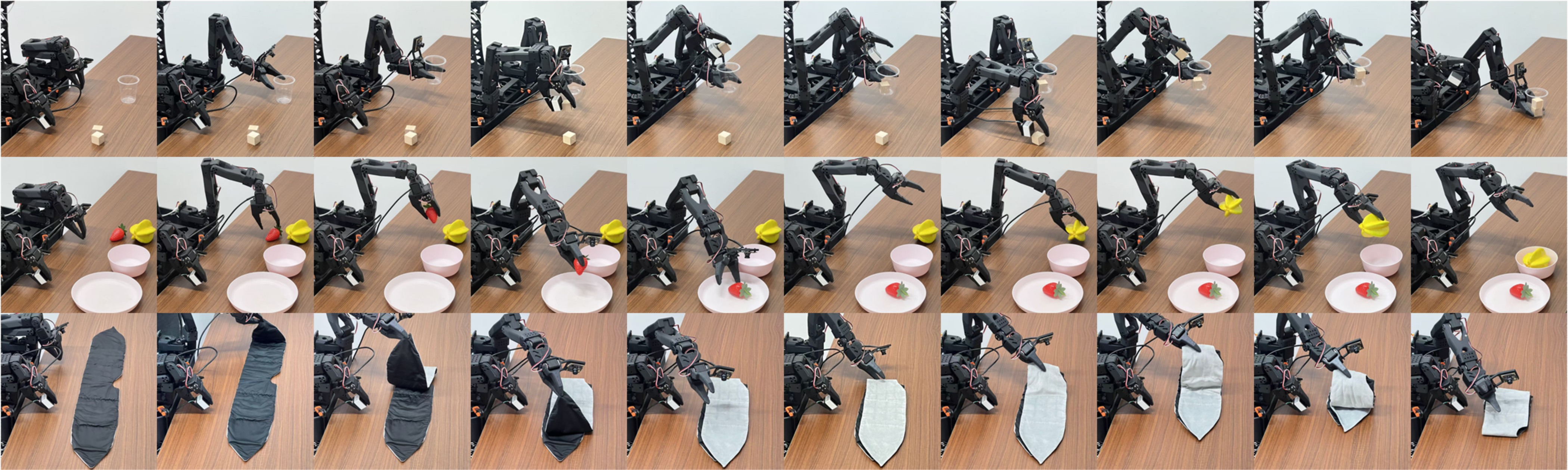}
	\caption{\textbf{Qualitative Results on XLeRobot Benchmark.} 
		We visualize three representative tasks from the real-world suite.
		\textbf{Row 1 (Dual-Arm Transfer):} A [Coordination] task where the robot must stabilize a receiving cup with the left hand while using the right hand to precisely drop an object into the target cup.
		\textbf{Row 2 (Fruit Sorting):} A [Sequential] task requiring the robot to discriminate different fruits and place them into corresponding containers.
		\textbf{Row 3 (Cloth Folding):} A [Precision] task demonstrating fine single-arm manipulation of deformable black cloth.
		CAPS robustly handles these challenges involving dynamic bimanual coordination, long-horizon planning, and deformable object manipulation.}
	\label{fig:real_world_viz}
\end{figure*}

\subsection{Task Descriptions}
\begin{enumerate}
	
	\item \textbf{Collaborative Storage - [Coordination]:} Lift the lid with the left hand and hold, place fruit with the right hand, close the lid with the left hand. (Goal: Test context retention capability, preventing premature release of the left hand)
	
	\item \textbf{Breakfast Assembly - [Sequential]:} Place bread on the plate, then place a mug next to the plate. (Goal: Test long-horizon memory, preventing forgetting of subgoals)
	
	\item \textbf{Fruit Sorting - [Sequential]:} Identify different fruits (e.g., strawberry, starfruit) on the table, grasp and place them into corresponding containers by category. (Goal: Test multi-object visual discrimination and continuous classification planning)
	
	\item \textbf{Towel Folding - [Coordination]:} Grasp two corners of a towel with both arms and fold towards the center. (Goal: Test bimanual synchronized motion control for deformable objects)
	
	\item \textbf{Stacking Bowls - [Precision]:} Take a bowl from the shelf and precisely stack it on another bowl on the table. (Goal: Test fine alignment capability)
	
	\item \textbf{Dual-Arm Transfer - [Coordination]:} Hold a receiving cup with the left hand, and use the right hand to precisely place an object into the target cup. (Goal: Test dynamic spatiotemporal coordination; this task contains irreversible actions)
	
	\item \textbf{Wipe and Place - [Sequential]:} Wipe the table with a sponge, return the sponge, then place a coaster. (Goal: Test tool use and task switching)
	
	\item \textbf{Cloth Folding - [Precision]:} Use a single arm to grasp black cloth on the table and complete fine folding. (Goal: Test fine manipulation and trajectory control for deformable objects)
	
	\item \textbf{Unpack Grocery - [Sequential]:} Take an apple and a banana out of a paper bag and arrange them on the table. (Goal: Test long-horizon planning and object constancy)
	
	\item \textbf{Handover and Place - [Coordination]:} Pick up an object with the right hand, pass it to the left hand, and place it in a drawer with the left hand. (Goal: Test complex bimanual interaction timing)
	
\end{enumerate}

\subsection{Full Quantitative Results}
We conducted 20 consecutive trials for each task. As shown in Table \ref{tab:full_real_suite}, CAPS outperforms baselines in all tasks. Especially in irreversible tasks like Dual-Arm Transfer and Unpack Grocery (Task 6 and Task 9), CAPS achieved maximum improvement (+25\%), attributed to the MCMC search effectively evaluating risk states before execution.

\begin{table}[h]
	\centering
	\caption{\textbf{Success Rate (\%) on 10 Real-World Task Suite.} Data shows CAPS has significant advantages in long sequence and complex coordination tasks.}
	\label{tab:full_real_suite}
	\resizebox{0.75\textwidth}{!}{
		\begin{tabular}{ll|ccc|c}
			\toprule
			\textbf{ID} & \textbf{Task Name} & \textbf{Base ($\pi_0$)} & \textbf{+ TACO} & \textbf{CAPS (Ours)} & \textbf{Gap (vs TACO)} \\
			\midrule
			1 & Collaborative Storage & 35.0 & 55.0 & \textbf{75.0} & +20.0 \\
			2 & Breakfast Assembly & 40.0 & 50.0 & \textbf{70.0} & +20.0 \\
			3 & Fruit Sorting & 45.0 & 60.0 & \textbf{75.0} & +15.0 \\
			4 & Towel Folding & 20.0 & 35.0 & \textbf{55.0} & +20.0 \\
			5 & Stacking Bowls & 30.0 & 50.0 & \textbf{65.0} & +15.0 \\
			6 & Dual-Arm Transfer & 25.0 & 45.0 & \textbf{70.0} & +25.0 \\
			7 & Wipe and Place & 40.0 & 55.0 & \textbf{75.0} & +20.0 \\
			8 & Cloth Folding & 15.0 & 30.0 & \textbf{50.0} & +20.0 \\
			9 & Unpack Grocery & 35.0 & 55.0 & \textbf{80.0} & +25.0 \\
			10 & Handover and Place & 80.0 & 95.0 & \textbf{95.0} & +0.0 \\
			\midrule
			\multicolumn{2}{c|}{\textbf{Average Success Rate}} & 36.5 & 53.0 & \textbf{71.0} & \textbf{+18.0} \\
			\bottomrule
		\end{tabular}
	}
\end{table}

\section{Computational Platform Hardware Infrastructure}
\label{sec:appendix_hardware}

To ensure reproducibility of our Inference-time Compute benchmarks and latency evaluations, this section provides detailed specifications of the experimental platform.

Detailed configurations are summarized in Table \ref{tab:hardware_specs}.

\begin{table}[h]
	\centering
	\caption{Detailed Hardware Specifications.}
	\label{tab:hardware_specs}
	\begin{tabular}{l|l}
		\toprule
		\textbf{Component} & \textbf{Specification} \\
		\midrule
		\multicolumn{2}{c}{\textit{Processing Units}} \\
		\midrule
		GPU Accelerator & 4 $\times$ NVIDIA A100-SXM4-80GB \\
		Video Memory & 80 GB HBM2e per GPU (320 GB Total) \\
		CPU Processor & Intel Xeon Platinum 8358P @ 2.60GHz \\
		CPU Cores & 64 Cores / 64 Threads \\
		\midrule
		\multicolumn{2}{c}{\textit{Memory \& Storage}} \\
		\midrule
		System Memory & 1 TB \\
		High-Speed Storage & 1 TB \\
		\midrule
	\end{tabular}
\end{table}

\section{Additional Qualitative Visualizations}
\label{sec:appendix_viz}

To further demonstrate the generalization performance of CAPS across different task types, we provide detailed visualization results. Figure \ref{fig:libero_long_viz} illustrates the long-horizon consistency on the Libero-long benchmark, while Figure \ref{fig:libero_spatial_viz} demonstrates spatial reasoning capabilities on Libero-Spatial.

\begin{figure*}[h] 
	\centering
	\includegraphics[width=1.0\textwidth]{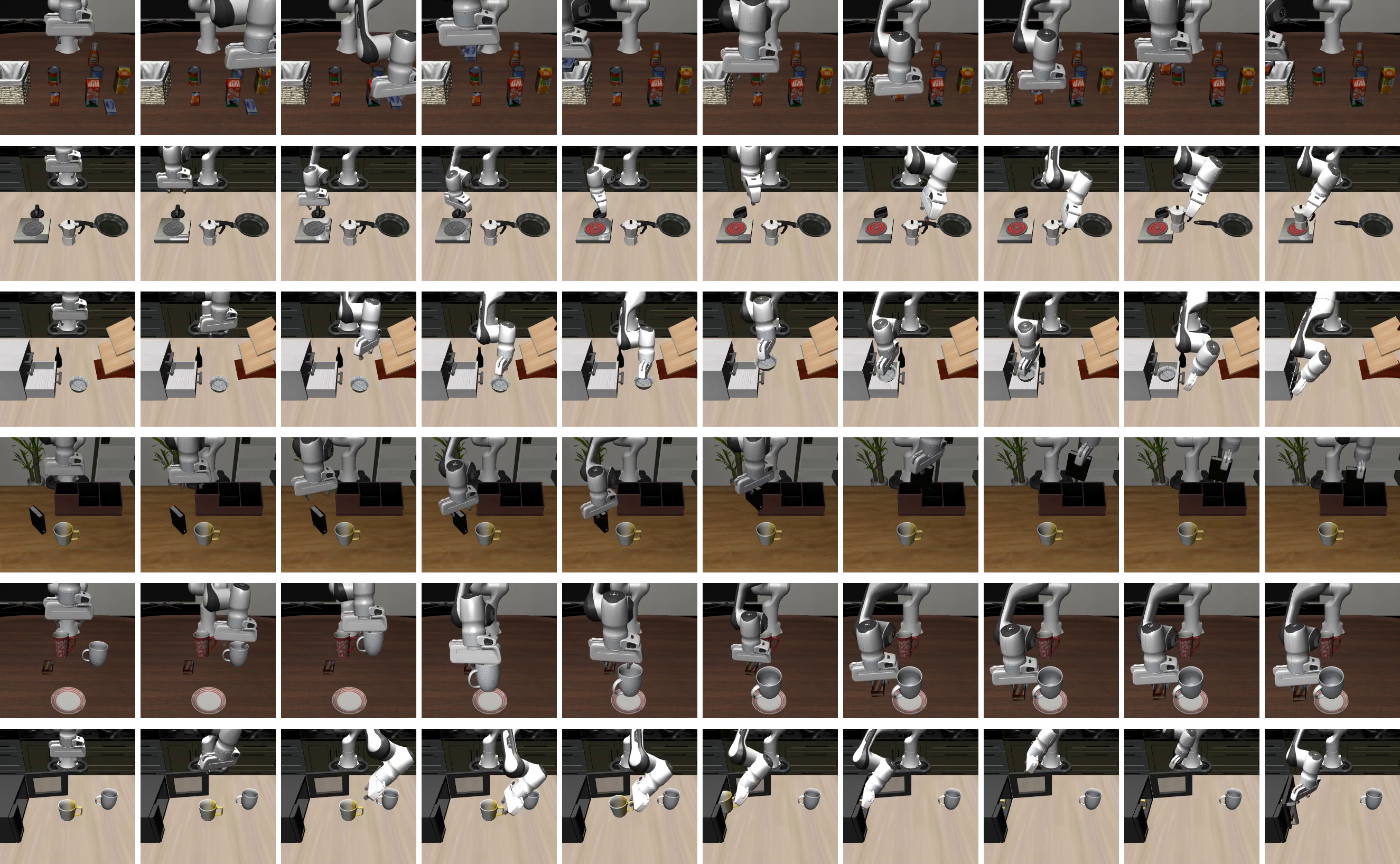}
	\vspace{-1.5em}
	\caption{\textbf{Long-Horizon Execution Sequence on Libero-long.} Illustrates the model's inference process in multi-stage tasks (e.g., \textit{Kitchen Arrangement}). Base models often "forget" subsequent instructions (drift) after completing the first stage. CAPS, by detecting SNR drops at Pivotal Windows, triggers MCMC search, thereby maintaining instruction consistency over long steps.}
	\label{fig:libero_long_viz}
	\vspace{-1em}
\end{figure*}

\begin{figure*}[h]
	\centering
	\includegraphics[width=1.0\textwidth]{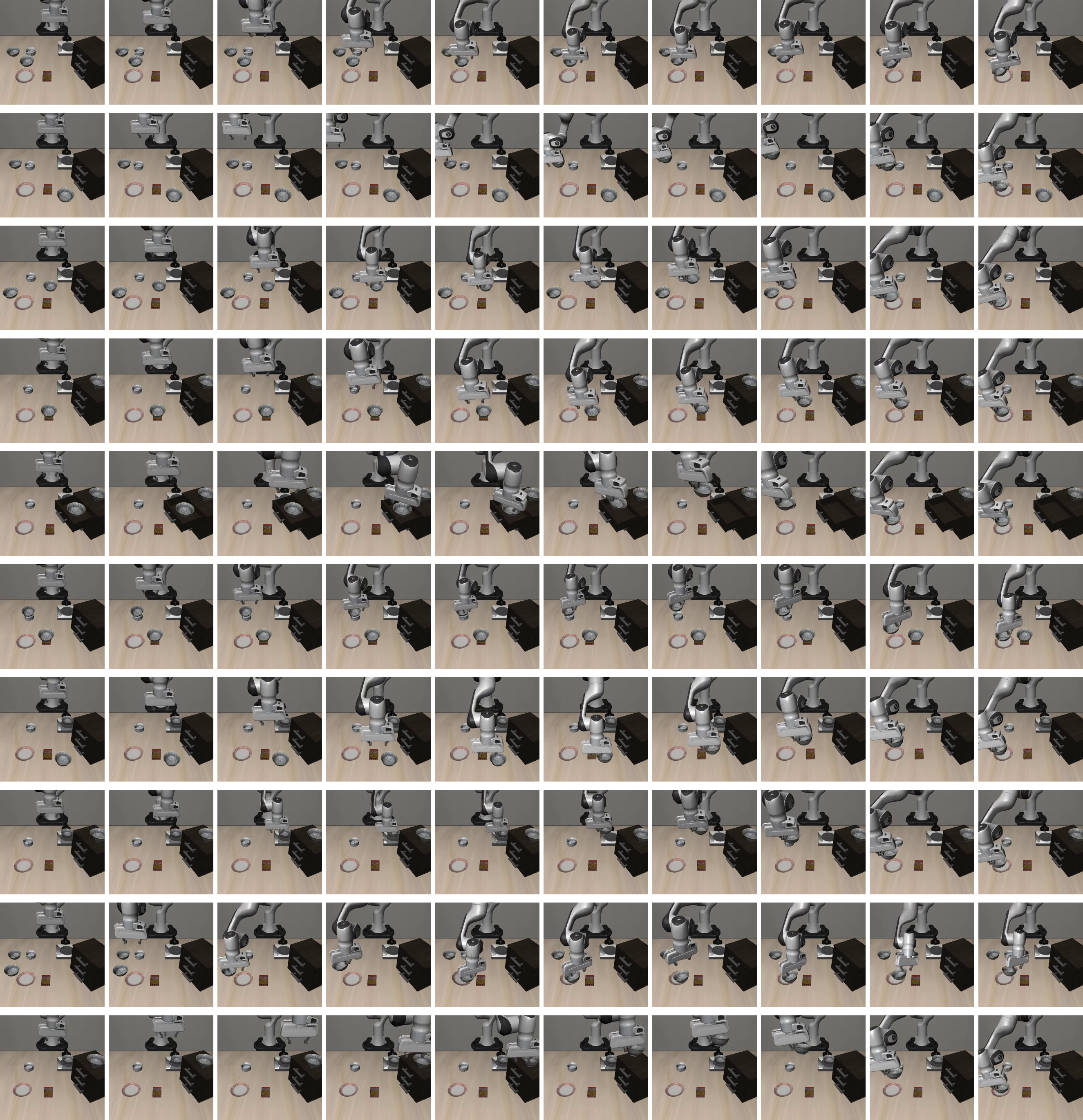}
	\caption{\textbf{Visualization of Libero-Spatial Reasoning Tasks.} This benchmark focuses on evaluating the model's understanding of spatial relationship instructions (e.g., "place the red object next to the green object"). The global sharpening mechanism of CAPS helps the model distinguish ambiguous spatial descriptions in probability space, planning collision-free placement trajectories.}
	\label{fig:libero_spatial_viz}
\end{figure*}

\section{CAPS Algorithm Pseudocode}
\label{sec:appendix_algo}

Algorithm \ref{alg:caps} details the inference process of CAPS. To align with the theoretical analysis in the main text, we explicitly calculate Shannon Entropy as a proxy metric for Contextual SNR in the algorithm. When the entropy value exceeds a specific threshold (implying reduced SNR), the system switches from "Greedy Execution Mode" to "Iterative Search Mode."

\begin{algorithm}[h]
	\caption{Context-Aware Power Sampling (CAPS)}
	\label{alg:caps}
	\begin{algorithmic}[1]
		\STATE {\bfseries Input:} Instruction $I$, Current Visual Observation $V_t$, History Context $H_t$, VLA Model $p_\theta$, Sharpening Factor $\alpha$, Entropy Threshold $\gamma$, MCMC Iterations $N$, Action Block Length $B$
		\STATE {\bfseries Output:} Optimized action block sequence $\tau_{final}$
		
		\STATE \textcolor{gray}{// Phase 1: Fast Execution (System 1)}
		\STATE Compute Greedy Path (Maximum Likelihood Estimation):
		\STATE $\tau_{init} \leftarrow \operatorname*{arg\,max}_{\tau} \prod_{k=0}^{B-1} p_\theta(a_{t+k} | I, V_t, H_t, a_{<t+k})$
		
		\STATE \textcolor{gray}{// Phase 2: Metacognitive Control (SNR Check via Entropy)}
		\STATE Compute Average Shannon Entropy of current action block (Shannon Entropy as SNR Proxy):
		\STATE \textcolor{gray}{// Note: Entropy is calculated over the valid action space $\mathcal{A}$}
		\STATE $\mathcal{H}(H_t) \leftarrow - \frac{1}{B} \sum_{k=0}^{B-1} \sum_{a \in \mathcal{A}} p_\theta(a | I, V_t, H_t, a_{<t+k}) \log p_\theta(a | I, V_t, H_t, a_{<t+k})$
		
		\IF{$\mathcal{H}(H_t) \le \gamma$}
		\STATE \textcolor{gray}{// High SNR (Low Entropy): Execute Greedy Policy}
		\STATE \textbf{return} $\tau_{init}$
		\ELSE
		\STATE \textcolor{gray}{// Low SNR (High Entropy): Initiate MCMC Search (System 2)}
		\STATE Initialize MCMC Chain (Warm Start): $\tau_{curr} \leftarrow \tau_{init}$ 
		
		\FOR{$m = 1$ {\bfseries to} $N$}
		\STATE \textcolor{gray}{// Proposal: Resample with high temperature (Exploration)}
		\STATE Sample Candidate $\tau_{prop} \sim q(\cdot | \tau_{curr}) \propto p_\theta(\cdot | I, V_t, H_t)^{1/\alpha}$
		
		\STATE \textcolor{gray}{// Acceptance: Metropolis-Hastings Step based on Power Dist.}
		\STATE Compute Acceptance Rate $A$:
		\STATE $A \leftarrow \min \left( 1, \frac{p_\theta(\tau_{prop} | I, V_t, H_t)^\alpha}{p_\theta(\tau_{curr} | I, V_t, H_t)^\alpha} \cdot \frac{q(\tau_{curr} | \tau_{prop})}{q(\tau_{prop} | \tau_{curr})} \right)$
		
		\STATE Sample Uniform Distribution $u \sim \mathcal{U}[0, 1]$
		\IF{$u < A$}
		\STATE $\tau_{curr} \leftarrow \tau_{prop}$ \COMMENT{Accept \& Update}
		\ENDIF
		\ENDFOR
		\STATE \textbf{return} $\tau_{curr}$
		\ENDIF
	\end{algorithmic}
\end{algorithm}

\end{document}